\newtheorem{claim}{Claim}
\newtheorem{theorem}{Theorem}
\newtheorem{assumption}{Assumption}
\newtheorem{lemma}{Lemma}
\DeclareMathOperator{\poly}{poly}
\DeclareMathOperator{\B}{B}
\newcommand\TD{\textsc{Tensor-Decomp}\xspace}
\newcommand\FTD{\textsc{Feature-Tensor-Decomp}\xspace}
\def\P{\mathbb{P}}
\def\E{\mathbb{E}}
\def\R{\mathbb{R}}
\def\bin{\text{Bin}}
\def\diag{\text{diag}}
\def\mi{\text{min}}
\def\bet{\text{beta}}
\def\one{\text{1}}
\def\T{\mathcal{T}}
\def\G{\mathcal{G}}
\def\H{\mathcal{H}}
\newcommand\mypara[1]{\medskip\noindent{\textbf{#1}}}
\def\Idea{Modification}
\title{Spectral Learning of Binomial HMMs for DNA Methylation Data}
\author{
Chicheng Zhang$^1$,
Eran A. Mukamel$^2$,
Kamalika Chaudhuri$^3$
\\
$^1$ Microsoft Research \\
$^2$ Dept. of Computer Science and Engineering, University of California San Diego \\
$^3$ Dept. of Cognitive Science, University of California San Diego \\
chicheng.zhang@microsoft.com,
kamalika@cs.ucsd.edu,
emukamel@ucsd.edu
}
\begin{document}

\maketitle

\begin{abstract}
We consider learning parameters of Binomial Hidden Markov Models, which may be used to model DNA methylation data. The standard algorithm for the problem is EM, which is computationally expensive for sequences of the scale of the mammalian genome.  Recently developed spectral algorithms can learn parameters of latent variable models via tensor decomposition, and are highly efficient for large data. However, these methods have only been applied to categorial HMMs, and the main challenge is how to extend them to Binomial HMMs while still retaining computational efficiency. We address this challenge by introducing a new feature-map based approach that exploits specific properties of Binomial HMMs. We provide theoretical performance guarantees for our algorithm and evaluate it on real DNA methylation data.
\end{abstract}

\setlength{\abovecaptionskip}{5pt}
\setlength{\belowcaptionskip}{0pt}
\setlength{\dblfloatsep}{0pt}
\setlength{\dbltextfloatsep}{0pt}

\section{Introduction}

Epigenetic modifications of DNA are critical for a wide range of biological processes. DNA methylation, i.e. the addition of a methyl group to genomic cytosine, is a well known and much studied epigenetic modification of mammalian genomes. However, the locations of methyl marks throughout the genome in particular cell types are largely unknown. Recent advances in genome sequencing technology now make it possible to measure DNA methylation with single-base resolution throughout the genome using whole-genome bisulfite sequencing (BS-Seq) \cite{Mo2015-gq,Schultz2015-gu}.  However, this data are limited by incomplete coverage and stochastic sampling of cells. Probabilistic methods for modeling DNA methylation data sets are therefore essential.

A DNA methylation profile, also called a methylome, is a binary sequence, where a location corresponds to the position of a cytosine base within the genome. A location $t$ has some underlying functional property, modeled by a hidden state $h_t$, and may be methylated with some probability $p_{h_t}$ that depends on the underlying state. For example, locations within the promoter regions of genes generally have very low probability of DNA methylation, while gene bodies and intergenic regions in general are highly methylated. Measuring methylation in a collection of cells of the same type gives a pair of counts $(c_t, \mu_t)$, where the coverage $c_t$ represents how many measurements were made (i.e. how many cytosines sequenced), and $\mu_t$ represents how many of these were methylated. Thus, $\mu_t$ is distributed as $\bin(c_t, p_{h_t})$. A natural framework for modeling this sequence data is therefore a hidden Markov model with binomial distributed observations (Binomial HMM). Given a DNA methylation sequence, our goal is to find the parameters of the underlying model, which describe the states, their methylation probabilities, and also their transitions. These parameters may be further used to segment the genome into compartments, corresponding to hidden states, with potential biologically relevant function\cite{Ernst2012-jr,Roadmap_Epigenomics_Consortium2015-xo}.

The main challenge in addressing this problem is that mammalian genomes are long sequences, with $\sim 10^9$ nucleotides. This means that algorithms such as EM, which are generally used to learn latent variable models, are extremely expensive as each iteration makes a pass over the entire methylation sequence. Recent work by~\cite{AGHKT12} has led to the emergence of spectral algorithms that can learn parameters of latent variable models via tensor decomposition. These algorithms are highly efficient and can operate on large volumes of data and provide theoretical guarantees for convergence to a global optimum. However,  the challenge is that they only apply to categorical HMMs, and unlike likelihood-based methods, extending them to more complex models while still preserving computational efficiency is non-trivial. One plausible approach is to convert Binomial HMMs to categorical HMMs by converting the observations $(c_t, \mu_t)$ to categorical observations. However, this leads to a very high dimensional observation space, which in turn makes the algorithm expensive. A second plausible approach is kernelization~\cite{SADX14}; however, the kernelized algorithm requires a running time quadratic in the sequence length, which again is prohibitively large for real data.

To address this, we observe that none of the proposed approaches (direct conversion or kernelization) exploit specific properties of Binomial HMMs. In particular, an important property is that estimating the joint distribution of $c_t$ and $\mu_t$ given $h_t$ is not necessary, and a direct estimation of $\cbr{p_h}_{h=1}^m$ may result in better statistical efficiency. We exploit this key property to propose a novel feature map-based moditification of the tensor decomposition algorithm of~\cite{AGHKT12}, which we call \FTD. We equip this algorithm with a novel feature map, called the Beta Map, that is tailored to Binomial HMMs. We then provide a novel parameter recovery procedure, which works with the Beta Map to recover the final binomial probability parameters. Finally, we make the entire algorithm more robust against model mismatch by providing a novel stabilization procedure for recovering the transition parameters.

We evaluate our algorithm both analytically and empirically. Theoretically, we provide performance guarantees which show that the proposed algorithm recovers the parameters correctly provided the methylation sequence is long enough, and the granularity of the Beta Map is fine enough with respect to the difference in the methylation probabilities across states. Empirically, we evaluate our algorithm on synthetic and real DNA methylation data. Our experiments show that in all cases our proposed algorithm is an order of magnitude faster than EM, which allows us to run it on much longer sequences. Our algorithm has lower estimation error as well as lower variance on synthetic data, and manages to recover a known pattern of methylation probabilities on real data.

%(coverage, methylation) counts, where the

%Each DNA location in a cell is methylated with some probability, which depends on the functional properties of the location. Measuring

\section{Preliminaries}

\subsection{The Generative Model}

DNA methylation in mammals occurs mainly at CG dinucleotides, which occur roughly once for every 100 bases. Therefore, in this work we model the genome using non-overlapping 100 base-pair bins tiling the genome.  A DNA methylation data set from whole-genome bisulfite sequencing is represented by a sequence of pairs of integers $\cbr{x_t = (c_t, \mu_t)}_{t=1}^l$. At the genomic bin labeled by position $t$, $c_t \in \cbr{0,1,\ldots,N}$ is the {\em coverage}, i.e. number of sequenced DNA fragments that map to CG sites in that bin. The integer $\mu_t \in \cbr{0,1,\ldots,c_t}$ is the {\em methylation count}, which represents the number of DNA fragments that were found to be methylated at those CG sites. On average, $c_t$ is typically about $30$ for high-quality methylome data sets.

% Note that the average value of $c_t$ is typically on the order of 30 for deeply sequenced, high-quality methylome data sets.

We model this data by a binomial hidden Markov model. At position $t$, there is an underlying hidden state $h_t$ in $\{ 1, 2, \ldots, m \}$ generating the observation $x_t$, the dynamics of which are modeled by a Markov chain. A hidden state $h \in \{1, \ldots, m\}$ is associated with a methylation probability $p_h \in [0,1]$. The coverage $c_t$'s are observed. Given $c_t$ and $h_t$, the methylation count $\mu_t$ is drawn from a binomial distribution, with the mean parameter $p = p_{h_t}$. Thus, the binomial HMM model is represented by parameters $(\pi, T, p)$, where $\pi \in \R^m$ is the initial distribution over states, $T \in \R^{m \times m}$ is the transition matrix of the Markov chain over the states, and $p \in \R^m$ is the methylation probability vector.

\subsection{Notation}
For a matrix $M$, we denote by $M^l$ its $l$-th column, and denote by $M_{i,j}$ its $(i,j)$-th entry. Given a third order tensor $\T \in \R^{n_1 \times n_2 \times n_3}$, we use $\T_{i_1, i_2, i_3}$ to denote its $(i_1, i_2, i_3)$-th entry. The tensor product of vectors $v_1$, $v_2$ and $v_3$, denoted by $v_1 \otimes v_2 \otimes v_3$, whose $(i_1, i_2, i_3)$-th entry is $(v_1)_{i_1} (v_2)_{i_2} (v_3)_{i_3}$. A tensor $T$ is called symmetric if $\T_{i_1, i_2, i_3} = \T_{i_{\pi(1)}, i_{\pi(2)}, i_{\pi(3)}}$ for any permutation $\pi: \cbr{1,2,3} \to \cbr{1,2,3}$. Given a tensor $\T \in \R^{n_1 \times n_2 \times n_3}$ and matrices $V_i \in \R^{n_i \times m_i}$, $i = 1,2,3$, $\T(V_1, V_2, V_3)$ is a tensor of size $m_1 \times m_2 \times m_3$, whose $(i_1, i_2, i_3)$-th entry is given by: $\T(V_1, V_2, V_3)_{i_1, i_2, i_3} = \sum_{j_1, j_2, j_3} \T_{j_1, j_2, j_3} (V_1)_{j_1,i_1} (V_2)_{j_2, i_2} (V_3)_{j_3,i_3}$.

 Given a sample $S$ of size $M$ drawn from some distribution, we use $\hat{\E}[\cdot]$ to denote the empirical expectation over $S$, that is, given a function $f$, $\hat \E[f(x)] = \frac 1 M \sum_{x \in S} f(x)$.

% Under standard assumptions, $\hat\E f(x)$ converges to $\E f(x)$ with high probability.

%A $3$-dimensional array $\T \in \R^{n_1 \times n_2 \times n_3}$ is a 3rd-order tensor, and we use $\T_{i_1, i_2, i_3}$ to denote its $(i_1, i_2, i_3)$-th entry. The tensor product of vectors $v_i$, $i = 1, 2, 3$ is denoted by $v_1 \otimes v_2 \otimes v_3$, whose
%$(i_1, i_2, i_3)$-th entry is $(v_1)_{i_1} (v_2)_{i_2} (v_3)_{i_3}$.
%We say a tensor has rank $1$ if it can be
%written as a tensor product of vectors.

%For a vector $v \in \R^n$, denote by $v^{\otimes 3}$ as $v \otimes v \otimes v$, which is a symmetric rank-1 tensor.

%We define tensor-matrix multiplication as follows. Given a tensor $\T \in \R^{n_1 \times n_2 \times n_3}$ and matrices $V_i \in \R^{n_i \times m_i}$, $i = 1,2,3$, $\T(V_1, V_2, V_3)$ is a tensor of size $m_1 \times m_2 \times m_3$, whose $(i_1, i_2, i_3)$-th entry is computed by the formula $\T(V_1, V_2, V_3)_{i_1, i_2, i_3} = \sum_{j_1, j_2, j_3} \T_{j_1, j_2, j_3} (V_1)_{j_1,i_1} (V_2)_{j_2, i_2} (V_3)_{j_3,i_3}$.

%\subsection{Miscellaneous Notations}$\kc{needs some edits}

\subsection{Background: Spectral Learning for Categorical HMMs}
\label{sec:td}

%The main idea is to reduce estimation to tensor decompositon~\cite{KB09}, solving which gives the parameters.

Our proposed algorithm builds on~\cite{AGHKT12}, who provide a spectral algorithm for parameter estimation in categorial Hidden Markov Models. \cite{AGHKT12} consider categorical HMMs  -- where observations $x_t$ are drawn from a categorical distribution on $\{ 1, \ldots, n \}$. The distribution of $x_t$ when the hidden state is $h$ is given by $O_{h}$, where $O$ is a parameter matrix in $\R^{n \times m}$. As the model is different from the binomial HMM, the results of~\cite{AGHKT12} do not directly apply to our setting; we show in Section~\ref{sec:alg} how they can be adapted with a few novel modifications.

%We begin by describing their algorithm and performance guarantees, and in Section~\ref{sec:algorithm}, we explain how we adapt these algorithms to our problem.

% First, the algorithm construct cooccurence matrices and tensors for consecutive observations. Then, it make a series of transformations over the tensor to make it symmetric and orthogonal. Finally, it applies the tensor power method to decompose the tensor, and recover the HMM parameters subsequently.

We begin with a brief overview of the algorithm of~\cite{AGHKT12}, abbreviated as \TD. The key idea is that if certain conditions on the parameters of the HMM holds, then, decomposing a tensor of third moments of the observations can give a transformed version of the HMM parameters. The \TD\ algorithm has three steps -- first, it constructs certain cooccurence matrices and tensors; second, it decomposes the cooccurrence tensor after transforming it to ensure tractable decomposition, and finally, the parameters are recovered from the decomposition results.

\begin{itemize}

\item {\bf{Step 1: Construct Cooccurrence Matrices and Tensors.}} First, compute the empirical matrices $\hat{P}_{i,j} := \hat{\E}[x_i \otimes x_j]$ (where $i,j$ are distinct elements from $\cbr{1,2,3}$~\footnote{Although only the first three observations are used, the algorithm can be generalized to use all three consecutive observations in the sequences.}) and tensor $\hat{\T} := \hat{\E}[x_1 \otimes x_2 \otimes x_3]$.

\item{\bf{Step 2: Transform and Decompose.}} The tensor $\hat \T$ is related to the parameter $O$, which can in theory be recovered by decomposing $\hat \T$. However, directly decomposing $\hat \T$ is computationally intractable~\cite{HL13}, and so the tensor is converted to symmetric orthogonal form.

To symmetrize $\hat \T$, we compute the matrices $S_1 := \hat P_{2,3} \hat P_{1,3}^\dagger$ and $S_3 := \hat P_{2,1} \hat P_{3,1}^\dagger$. Then, compute $\G := \hat \T(S_1, I, S_3)$, which is symmetric. To orthogonalize $\G$, compute a matrix $J := S_3 \hat P_{3,2}$; take an SVD of $J$ and take the top $m$ singular vectors $U_m$, and singular values in diagnoal matrix $S_m$, getting the orthogonalization matrix $W = U_m S_m^{-1/2}$. Perform a linear transformation over tensor $\G$ using $W$, getting the symmetric orthogonal tensor $\H = \G(W, W, W)$.

$\H$ is then decomposed via the Tensor Power Method~\cite{AGHKT12} to recover the eigenvectors $v_l$'s and the eigenvalues $\lambda_l$.

\item {\bf{Step 3: Recover Parameters.}} The columns of $O$ can now be recovered as $\hat{O}^l = (W^T)^{\dagger} \hat\lambda_l \hat{v}_l$, for $l=1,2,\ldots,m$. Compute an estimate of the joint probablity of $h_2$ and $h_1$: $\hat{H}_{21}:=\hat{O}^{T\dagger} \hat P_{21} \hat{O}^\dagger$. Then estimate the initial probability vector and the transition matrix by $\hat\pi := \one^T \hat{H}_{21}^T$ and $\hat{T} := \hat{H}_{21} \diag(\hat \pi)^{-1}$.

\end{itemize}

%\bigskip
Observe that \TD has two advantages over popular parameter estimation algorithms such as EM. First, it only needs to make one pass over the data, and is thus computationally efficient if the size of the observation space is not too large. In contrast, EM proceeds iteratively and needs to make one pass over the data per iteration. Second, it achieves statistical consistency if the data is generated from an HMM (see Theorem 5.1 of~\cite{AGHKT12}), in contrast with methods such as EM, which do not have statistical consistency guarantees.

\section{Algorithm}
\label{sec:alg}

The main limitation of spectral algorithms is that unlike likelihood-based methods, they cannot be readily extended to more general models. In particular, the \TD\ algorithm for categorial HMMs does not directly apply to our setting. One plausible approach is to convert our model to a categorical HMM by converting the observations $(c_t, \mu_t)$ to categorical observations. However, this leads to a very high dimensional observation space, leading to high computational cost. A second plausible approach is kernelization~\cite{SADX14}; however, the kernelized algorithm requires a running time quadratic in the sequence length, which again is prohibitively large for real data.

% of dimension $n = \Theta(N^2)$, where $N$ is the maximum coverage (as large as $2000$), which in turn makes applying \TD\ highly computationally expensive.

To address this challenge, we observe that neither direct conversion nor kernelization exploits specific properties of Binomial HMMs. In particular, an important property is that estimating the joint distribution of $c_t$ and $\mu_t$ given $h_t$ is not necessary, and a direct estimation of $\cbr{p_h}_{h=1}^m$ may result in better statistical efficiency.

We exploit this key property to propose a novel feature map-based moditification of \TD, which we call \FTD. We equip \FTD\ with a novel feature map, called the Beta Map, that is tailored to Binomial HMMs. We then provide a novel parameter recovery procedure, which works with the Beta Map to recover the final binomial probability parameters. Finally, we make the entire algorithm more robust against model mismatch by providing a novel stabilization procedure for recovering the transition parameters.

%In the next subsection, we provide the details of each of these modifications

%We do this by proposing a novel feature map-based algorithm that is tailored to the problem, and involves several novel modifications over \TD.

\subsection{Key Components}

%We now describe the key components of our algorithm. \kc{what doesn't come across is that each component is a novel contribution. Also Idea is perhaps not the right word. Contribution? Adaptation? What is the right word? Modification? Check synonyms.}

We next describe the key modifications that we make to \TD\ to adapt it to binomial HMMs.

\paragraph{\Idea\ 1: Feature Map.} Our first contribution is to use a feature map $\phi$ to map our discrete observations $x = (c, \mu)$ to a $D$-dimensional vector $\phi(x)$. Thus, instead of computing the cooccurrence matrices and tensors in Step 1 of \TD, we now compute the empirical {\em{feature co-occurrence matrices and tensors}}:
\[ \hat P_{i,j} := \hat{\E}[\phi(x_i) \otimes \phi(x_j)], \quad \hat \T := \hat{\E}[\phi(x_1) \otimes \phi(x_2) \otimes \phi(x_3)],\]
and apply the remaining steps of \TD on these matrices and tensors. We call the resulting algorithm \FTD.

What does \FTD recover? Define the matrix $C$ as a $D \times m$ matrix whose $j$-th row is: $C^j = \E[\phi(x)|h = j]$; this is analogous to the observation matrix $O$ in~\cite{AGHKT12}. Provided certain conditions hold, we show in Section~\ref{sec:theory} that \FTD\ can, given sufficiently many sequences, provably recover a high quality estimate of $C$.

%What can be said about the performance of \FTD? Define the matrix $C$ as a $D \times m$ matrix whose $j$-th row is: $C^j = \E[\phi(x)|h = j]$; this is analogous to the observation matrix $O$ in~\cite{AGHKT12}. Provided a condition analogous to Assumption~\ref{assumption:rc} holds, we can show that \FTD\ can, given sufficiently many sequences, provably recover a high quality estimate of $C$. In particular, the required condition is the following Feature Rank Condition.

\paragraph{\Idea\ 2: Beta Mapping.} We next propose a novel feature map, called the Beta Map, that is tailored to Binomial HMMs; the map is inspired by the popular Beta-Binomial conjugate prior-posterior system in Bayesian inference.

Define the function $\varphi_{\bet, D}((c,\mu), t) = \frac{1}{\B(\mu+1, c-\mu+1)} t^\mu (1-t)^{c-\mu}$ as the density of the Beta distribution with shape parameters $\mu+1$ and $c-\mu+1$, where $\B(\cdot,\cdot)$ is the Beta function. Now, given an observation $(c, \mu)$, the Beta Map $\phi_{\bet, D}(c, \mu)$ is an $n$-dimensional vector whose $i$-th entry is:
\begin{equation}
  (\phi_{\bet, D}(c, \mu))_i := \int_{(i-1)/D}^{i/D} \varphi_{\bet, D}((c,\mu), t) dt
%  &=& \int_{(i-1)/D}^{i/D} \frac{1}{\B(\mu+1, c-\mu+1)} t^\mu (1-t)^{c-\mu} dt.
\label{eqn:fm}
\end{equation}
We apply Algorithm $\FTD$ with the feature map $\phi_{\bet, D}$.

The Beta Map has two highly desirable properties. First, it maps the observation $(\mu_t, c_t)$ to a probability mass function with mean close to $\frac{\mu_t}{c_t}$, which is $p_{h_t}$ in expectation. Second, if $c_t$ is large, then the feature map is highly concentrated around $\approx \frac{\mu_t}{c_t}$, reflecting higher confidence in the estimated $p_{h_t}$.

\paragraph{\Idea\ 3: Recovery of the Binomial Probabilities.} As we observe earlier, running $\FTD$ with the Beta Map will recover the expected feature map matrix $C$, and not the binomial probabilities. We now provide a novel recovery procedure to estimate $p_h$ from $C$.

%\kc{this part needs to be fixed}

% feature map $\phi$, we will recover the expected feature map $C_2$, which does not equal the binomial probability vector $p$. We describe a recovery procedure below that extracts the value of $p_h$ from the $h$th column of $C_2$ for each hidden state $h$.

Suppose $\phi = \phi_{\bet,D}$ is the Beta mapping, and $\hat{C}$ is the estimated feature map recovered by \FTD. Observe that for any $h$, the expected Beta map $C^h$ is a mixture of Beta distributions, where each mixture component corresponds to a pair $(c, \mu)$ and has shape parameter $(\mu+1,c-\mu+1)$ (and hence mean $\approx \frac{\mu+1}{c+2}$) and mixing weight $p(c_t=c,\mu_t=\mu|h_t=h)$. As the mean of the mixture distribution is equal to the weighted average over the component means, this gives us: $\frac{1}{D} \sum_{i=1}^D \frac{i}{D} (C)_{i,h} \approx \E[\frac{\mu+1}{c+2}|h]$. Assuming that $h$ and $c$ are independent, the right hand side simplifies to $a + (1-2a) p_h$, where $a := \E[\frac{1}{c+2}]$. Using $\hat a := \hat \E[\frac{1}{c+2}]$ instead of $a$, this gives the following recovery equation~\eqref{eqn:recoverp} for $p_h$:
\begin{equation}
  \hat{p}_h := \frac{\frac{1}{D} \sum_{i=1}^D \frac{i}{D} (\hat{C})_{i,h} - \hat a}{1 - 2 \hat a}
  \label{eqn:recoverp}
\end{equation}

Due to the estimation error in $C$ and the discretization of the Beta mapping, we cannot hope to recover $p_h$ exactly.
However, if $C$ is accurately recovered, and the granularity of the Beta map is fine, then our estimate of $p_h$ is accurate.
Our estimation grows accurate with increasing granularity of discretization, at the expense of a higher running time.

%as we show later in Lemma~\ref{lem:D-large},

\paragraph{\Idea\ 4: Stabilization.} Model mismatch in real data often leads to unstable solutions in~\cite{AGHKT12}, especially in recovering the transition matrix $T$. To prevent instability, we propose an alternative approach: a least squares formulation to recover $\pi$ and $T$. Given $\hat{C}$ as an estimatior of $\E[\phi(x)|h]$, we propose to solve the following optimization problem:
\[ \min_{H_{2,1}: \forall i,j (H_{2,1})_{i,j} \geq 0, \sum_{i,j} (H_{2,1})_{i,j} = 1} \| P_{2,1} - \hat{C} H_{2,1} \hat{C}^T \|_F^2 \]
Here, $(H_{2,1})_{i,j}$ is our proposed estimator of $\P(x_2 = i, x_1 = j)$, $i,j \in \cbr{1,\ldots,m}$. Next, we recover the transition matrix and intial probability by applying the formulae $\hat{\pi} =\one^T H_{2,1}$ and $\hat{T} = H_{2,1} \diag(\hat{\pi})^{-1}$. The key difference betwen this procedure and the Step 3 of \TD is that, our optimization problem ensures that our estimators of $\pi$ and $T$ are entrywise positive, and thus no postprocessing are needed for subsequent usage of the parameters.
%\kc{more details needed on how these are different}

\subsection{Extension: Multiple Cell Types}
Finally, an additional goal for us is to study multiple aligned methylation sequences in order to identify differential methylation states, where the expected methylation probabilities are different across different cell types.

Here, we observe two coverage methylation pairs per location, one for each cell type, so an observation $x = ((c^1, \mu^1), (c^2, \mu^2))$. Our goal is to estimate a pair of methylation probabilities $(p^1_h, p^2_h)$ for each state $h$ that is shared across cells. To this end, we construct a concatenated feature map:
\[ \Phi(x) = \begin{bmatrix} \phi(c^1, \mu^1) \\ \phi(c^2, \mu^2) \end{bmatrix} \]
Following the tensor decomposition algorithm in Section 2, we can recover the expected feature map
given hidden states:
\[ C = \E[\Phi(x)|h] = \begin{bmatrix} \E[\phi(c^1, \mu^1)|h] \\ \E[\phi(c^2, \mu^2)|h] \end{bmatrix}\]
Applying the recovery procedure of $p_h$ to each block gives a pair $(p^1_h, p^2_h)$ for each hidden state $h$. If we see a large difference between $p^1_h$ (methylation probability in cell type 1) and $p^2_h$ (methylation probability in cell type 2), then we identify state $h$ as a differential methylation state.
%We follow the same stable recovery procedure as in the last subsection.

%Empirically, this can be shown to have superior performance compared to directly applying the formula
%\[ H_{2,1} := \hat{C}_2^{\dagger} P_{2,1} \hat{C}_2^{\dagger T}\]
%which will get an estimator of $H_{2,1}$ with large negative entries.

\section{Performance Guarantees}
\label{sec:theory}

%We begin by establishing an analogue of the Rank Condition (see e.g. Condition 1 of~\cite{HsuKZ2012}) that ensures that \TD is successful. For \FTD, the analogous condition is called the Feature Rank Condition and is described in Assumption~\ref{assumption:rcfm}.

The \TD\ algorithm has provable guarantees when an underlying condition, called the Rank Condition, on the parameters of the HMM holds. For \FTD, the analogous condition is the Feature Rank Condition below.

\begin{assumption}[Feature Rank Condition]
The expected feature map matrix $C \in \R^{D \times m}$ and the transition matrix $T \in \R^{m \times m}$ are of full column rank.
\label{assumption:rcfm}
\end{assumption}

The Feature Rank Condition is satisfied for Beta Maps when the $p_h$ values are well separated and the granularity of the Beta Map is high. Formally, if $q$ is the minimum gap $\min_{i \neq j} |p_i - p_j|$, then, Theorem~\ref{thm:rcfm} shows that as long as the discretization parameter $D$ and the coverage $c$ are above some function of $q$, the Feature Rank Condition is satisfied. For simplicity we assume here that the coverage $c$ is fixed.

%expected feature map matrix is of full column rank. We give the formal theorem statement
%below.
% For simplicity, in this section, we assume that the coverage $c$ is a fixed number
%across all positions; the result can be easily generalized to the setting when $c$ is random.

\begin{theorem}
Suppose $D \geq \frac{4}{q}$ and $c \geq \frac{512}{q^2}$. Consider the Beta feature map $\phi_{\bet, D}(x)$ defined as in Equation~\eqref{eqn:fm}. Then $C$, the expect feature map matrix, has minimum singular value at least
$\frac{1}{2\sqrt{D}}$, and is thus of full column rank.
\label{thm:rcfm}
\end{theorem}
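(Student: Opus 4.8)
The plan is to work directly from the variational characterization $\sigma_{\min}(C) = \min_{\|w\|_2 = 1}\|Cw\|_2$ and to show that $\|Cw\|_2^2 \ge \frac{1}{4D}$ for every unit vector $w \in \R^m$. The starting observation is that entry $i$ of $Cw$ is the integral of a single function over bin $i$: writing $g_h(t) := \sum_{\mu} \P(\mu \mid h)\,\varphi_{\bet,D}((c,\mu),t)$ for the expected Beta density in state $h$ (a mixture of $\mathrm{Beta}(\mu+1,\,c-\mu+1)$ densities with binomial weights), we have $C_{i,h} = \int_{(i-1)/D}^{i/D} g_h(t)\,dt$, hence $(Cw)_i = \int_{(i-1)/D}^{i/D} G_w(t)\,dt$ with $G_w := \sum_h w_h g_h$, and each column of $C$ sums to $1$ since $\int_0^1 g_h = 1$. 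I would first record two concentration facts about $g_h$: by the law of total variance its variance is at most $\frac{1}{4(c+3)} + \frac{c\,p_h(1-p_h)}{(c+2)^2} \le \frac{1}{2c}$, and its mean lies within $O(1/c)$ of $p_h$. Thus $g_h$ is a bump essentially supported near $p_h$ with standard deviation $O(1/\sqrt{c})$.

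Next I would localize. Let $J_h := [p_h - q/2,\, p_h + q/2]$; since all pairwise gaps are $\ge q$ these intervals have disjoint interiors, and since $D \ge 4/q$ each $J_h$ contains a nonempty set $B_h$ of at most $qD$ full bins (the $B_h$ being disjoint). Chebyshev's inequality together with $c \ge 512/q^2$ gives $\int_{J_h} g_h \ge 1 - \frac{1}{256}$, so the diagonal mass $\alpha_h := \sum_{i \in B_h}\int_{\mathrm{bin}\,i} g_h$ is close to $1$, while the off-diagonal masses $\beta_{h,h'} := \sum_{i \in B_h}\int_{\mathrm{bin}\,i} g_{h'}$ for $h' \ne h$ are small, as $g_{h'}$ concentrates away from $J_h$. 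Applying Cauchy--Schwarz blockwise,
\[
\|Cw\|_2^2 \;\ge\; \sum_{h}\sum_{i \in B_h}(Cw)_i^2 \;\ge\; \sum_h \frac{1}{|B_h|}\Big(\sum_{i\in B_h}(Cw)_i\Big)^2 \;\ge\; \frac{1}{qD}\,\|M w\|_2^2,
\]
where $M \in \R^{m\times m}$ has diagonal entries $\alpha_h$ and off-diagonal entries $\beta_{h,h'}$, so that $\sum_{i\in B_h}(Cw)_i = (Mw)_h$.

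It then suffices to show $\sigma_{\min}(M) \ge \tfrac12$, since $q \le 1$ gives $\|Cw\|_2^2 \ge \sigma_{\min}(M)^2/(qD) \ge \tfrac{1}{4D}$. I would write $M = \diag(\alpha) + E$ with $\alpha_h \ge 1 - \tfrac{1}{128}$ and bound $\|E\|_2$ by its row sums. The crucial step is that each row sum $\sum_{h'\ne h}\beta_{h,h'}$ is controlled \emph{independently of $m$}: bounding each $\beta_{h,h'}$ by the Chebyshev tail $\le 4\sigma^2/|p_h-p_{h'}|^2$ and using that the pairwise-gap-$q$ structure forces the distances $|p_h-p_{h'}|$ to be at least $q, 2q, 3q, \dots$ on each side of $p_h$, the row sum is at most $\frac{8\sigma^2}{q^2}\sum_{j\ge 1} j^{-2}$, a small absolute constant once $\sigma^2 \le q^2/1024$. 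Gershgorin then yields $\sigma_{\min}(M) \ge \min_h \alpha_h - \|E\|_2 \ge \tfrac12$; full column rank follows since $\sigma_{\min}(C) > 0$ and $m \le 1/q + 1 \le D$.

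The step I expect to be the main obstacle is precisely this uniform-in-$m$ control of the off-diagonal leakage: the target bound $\tfrac{1}{2\sqrt D}$ carries no dependence on the number of states, so a naive estimate ``each of the $m-1$ off-diagonal terms is at most $\tfrac{1}{256}$'' is useless, and one must exploit that \emph{all} pairwise separations are at least $q$ to sum the Chebyshev tails into a convergent series. A secondary technical point is computing the mixture variance of $g_h$ correctly via the law of total variance and verifying that the $O(1/c)$ bias in its mean does not erode the separation between the intervals $J_h$.
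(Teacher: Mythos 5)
Your proof is essentially correct and reaches the stated bound, but it takes a genuinely different route from the paper's. Both arguments rest on the same structural fact -- each column of $C$ is a probability vector whose mass concentrates in a window around $p_h$, and the conditions $D \geq 4/q$, $c \geq 512/q^2$ make these windows disjoint -- but the paper then runs an $\ell_1$ argument rather than your $\ell_2$ one. It splits $C = T + S$, where $T$ keeps only the entries in each column's own block: disjointness of the blocks gives $\|Tx\|_1 \geq \frac{3}{4}\|x\|_1$, and since each column is a probability vector with at least $3/4$ of its mass on its own block, the maximum column sum of $S$ is at most $1/4$, so $\|Sx\|_1 \leq \frac{1}{4}\|x\|_1$; the $\ell_2$ bound then follows from $\|Cx\|_2 \geq \|Cx\|_1/\sqrt{D}$ and $\|x\|_1 \geq \|x\|_2$. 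Note that this choice of norm dissolves what you identify as the ``main obstacle'': the $\ell_1\!\to\!\ell_1$ operator norm is the maximum \emph{column} sum, so the total leakage of state $h'$'s column into all foreign blocks is bounded by its single tail probability, with no per-pair accounting and no dependence on $m$. Your summed-Chebyshev-tails-over-the-gap-lattice argument is the price you pay for staying in $\ell_2$; in exchange you get the slightly sharper constant $\frac{1}{2\sqrt{qD}}$ (versus $\frac{1}{2\sqrt{D}}$, recovered since $q \leq 1$) and you need only Chebyshev via the law of total variance, whereas the paper invokes Hoeffding for the binomial plus sub-Gaussianity of the Beta distribution. Two repairs you should make: (i) Gershgorin localizes \emph{eigenvalues} of the non-symmetric matrix $M$, not its singular values, so the step $\sigma_{\min}(M) \geq \min_h \alpha_h - \|E\|_2$ should instead cite Weyl's singular-value perturbation inequality; and (ii) bounding $\|E\|_2$ requires both row and column sums, e.g.\ via $\|E\|_2 \leq \sqrt{\|E\|_{1\to 1}\,\|E\|_{\infty\to\infty}}$ -- fortunately the column sums are trivially at most the tail mass of $g_{h'}$ outside $J_{h'}$ (the paper's observation), so this costs nothing. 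With these adjustments your constants check out with room to spare.
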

%\begin{eqnarray*}
%  (\phi_{\bet, D}(c, \mu))_i = \int_{\frac{i-1}{n}}^{\frac{i}{n}} \frac{1}{\B(\mu+1, c-\mu+1)} t^\mu (1-t)^{c-\mu}  dt, & \\
%\end{eqnarray*}
%for $i \in \cbr{1,2,\ldots,D}$.
%$C_i = \E[(\phi_{\bet, D}(x))_i|h = j], i \in \cbr{1,2,\ldots, D}, j \in \cbr{1,2,\ldots,m}$
%\label{thm:rcfm}

%with some modifications of Theorem~\ref{thm:sc-td}

Provided the conditions of Theorem~\ref{thm:rcfm} hold, we can show statistical consistency of \FTD; the proof is given in the appendix.

%full version.

\begin{theorem}[Statistical consistency of \FTD]
Suppose \FTD receives $M$ iid samples drawn from a binomial hidden Markov model represented by parameters $(\pi, T, p_c, p)$. In addition, suppose $T$ is of full rank, $\pi$ is positive entrywise, and the coverage $c$ is large enough.
Then with high probability, the $\ell_2$ distances between the outputs $\hat{p}$, $\hat{T}$ and $\hat{\pi}$ and the respective underlying parameters $\pi, T, p$ converge to zero, with increasing sample size $M$ and Beta Map dimension $D$.
\label{thm:mpc}
\end{theorem}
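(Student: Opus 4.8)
The plan is to decompose the total estimation error into a statistical part, driven by the finite sample size $M$, and a discretization part, driven by the finite Beta Map dimension $D$, and to show that each vanishes in the appropriate limit. The starting point is that, at the population level, the feature co-occurrence matrices factor through $C$, $\pi$ and $T$ exactly as the observation moments do in \cite{AGHKT12}, and the third moment tensor, after the symmetrization and whitening of \TD, decomposes as $\sum_h \lambda_h v_h^{\otimes 3}$ with the $v_h$ encoding the columns $C^h$. Under the Feature Rank Condition (Assumption~\ref{assumption:rcfm}), which Theorem~\ref{thm:rcfm} guarantees once $c \geq 512/q^2$ and $D \geq 4/q$, this decomposition is unique up to permutation, so running \FTD on the exact moments recovers $C$ exactly. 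Consistency then reduces to a perturbation argument around this population fixed point.

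First I would establish concentration of the empirical moments. Because every entry of the Beta Map lies in $[0,1]$ and the feature vector has unit $\ell_1$ mass, the feature co-occurrence matrices and tensor are averages of bounded iid terms, so matrix and tensor Bernstein/Hoeffding inequalities give $\|\hat P_{i,j} - P_{i,j}\|$ and $\|\hat\T - \T\|$ bounded by some $\epsilon_M = \mathrm{poly}(D)/\sqrt M$ (up to logarithmic factors) with high probability. Next I would propagate this perturbation through the whitening and Tensor Power Method steps by invoking the robustness analysis of \cite{AGHKT12} (their Theorem 5.1 and the associated bounds for the power iteration and the pseudoinverse operations). The one quantity I must track carefully is the smallest singular value of $C$: Theorem~\ref{thm:rcfm} only guarantees $\sigma_{\min}(C) \ge 1/(2\sqrt D)$, so the recovery bound takes the form $\|\hat C^h - C^h\| \le \mathrm{poly}(D, 1/\sigma_{\min}(C)) \cdot \epsilon_M$, up to an unknown permutation of the columns that I fix by matching.

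Having recovered $\hat C$, I would analyze the two novel recovery procedures separately. For the binomial probabilities, I would plug $\hat C$ into the recovery equation~\eqref{eqn:recoverp} and split the error of $\frac1D \sum_i \frac iD \hat C_{i,h}$ into three pieces: the discretization bias $|\frac1D\sum_i \frac iD C_{i,h} - \E[\frac{\mu+1}{c+2}\mid h]| = O(1/D)$, from approximating each Beta component mean by its grid average; the statistical error inherited from $\|\hat C - C\|$; and the error $|\hat a - a| = O(1/\sqrt M)$. Using independence of $c$ and $h$ to get the identity $\E[\frac{\mu+1}{c+2}\mid h] = a + (1-2a)p_h$, and noting that $a = \E[1/(c+2)] \le 1/2$ so that the denominator $1-2\hat a$ is bounded away from zero for large $c$, I obtain $|\hat p_h - p_h| \to 0$ as $M, D \to \infty$. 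For the transition parameters, I would show that the population least-squares objective of Modification~4 has the true joint-state distribution $H_{2,1}$ as its unique constrained minimizer --- uniqueness following from injectivity of the map $H \mapsto C H C^T$ on the simplex, which is exactly where full column rank of $C$ is used --- and then argue that the constrained minimizer is a continuous function of $(\hat C, \hat P_{2,1})$, so that $\hat H_{2,1} \to H_{2,1}$; the formulae $\hat\pi = \mathbf{1}^T \hat H_{2,1}$ and $\hat T = \hat H_{2,1}\diag(\hat\pi)^{-1}$, together with $\pi$ being entrywise positive, then transfer this to convergence of $\hat\pi$ and $\hat T$.

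The main obstacle I expect is the tension between $M$ and $D$ hidden in the bounds above. Increasing $D$ is what drives the discretization bias in $\hat p$ to zero, but it simultaneously shrinks $\sigma_{\min}(C)$ to $1/(2\sqrt D)$ and inflates the dimension-dependent factors in both $\epsilon_M$ and the tensor-decomposition perturbation bound. Consequently the statistical error scales like $\mathrm{poly}(D)/\sqrt M$, and the theorem can only assert joint convergence along regimes where $M$ grows sufficiently fast relative to $D$ (e.g. $M \gg \mathrm{poly}(D)$); making this tradeoff explicit and exhibiting a schedule of $(M, D)$ that drives every error term to zero simultaneously is the delicate part. A secondary difficulty is establishing the stability of the constrained least-squares solution with a quantitative, uniform modulus of continuity, since the objective is quadratic but the feasible set is the probability simplex rather than all of $\R^{m \times m}$.
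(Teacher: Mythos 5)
Your proposal is correct in its overall architecture and, for most of the argument, mirrors the paper's proof: the paper likewise reduces the theorem to a quantitative version (Theorem~\ref{thm:mpcformal}), obtains concentration of the feature co-occurrence matrices and tensor from the fact that each $\phi_{\bet,D}(c,\mu)$ is a probability vector (its Lemma~\ref{lem:l1conc} actually gets a dimension-free $O(1/\sqrt{M})$ rate from the unit $\ell_1$ mass, slightly sharper than your $\poly(D)/\sqrt{M}$), propagates the perturbation through the tensor decomposition by citing the existing analysis of \cite{AGHKT12} (via \cite{zhang2015spectral}) with $\sigma_{\mi}(C)\geq \frac{1}{2\sqrt{D}}$ from Theorem~\ref{thm:rcfm}, and recovers $p_h$ exactly as you describe --- Lemma~\ref{lem:D-large} splits the error into the $\|\hat{C}^h - C^h\|$ term, an $O(1/D)$ discretization bias, and the $|\hat{a}-a|$ term, using the identity of your Lemma-\ref{lem:ph}-style computation and an assumption $a\leq \frac{3}{8}$ to keep $1-2\hat{a}$ bounded away from zero. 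The $M$-versus-$D$ tension you flag as the delicate part is resolved in the paper exactly as you suggest: fix $\epsilon$, set $D = \Theta(1/\epsilon)$, and then demand $M \geq \poly(\ldots, D, 1/\epsilon)$, so $M$ is scheduled after $D$. The one place you genuinely diverge is the recovery of $T$ and $\pi$: you analyze the constrained least-squares stabilization of \Idea~4, which requires proving that $H \mapsto \hat{C} H \hat{C}^T$ is injective on the simplex and that the constrained minimizer depends continuously (ideally with a quantitative modulus) on $(\hat{C},\hat{P}_{2,1})$ --- a step you acknowledge but do not carry out. The paper sidesteps this entirely: its Theorem~\ref{thm:efmc} recovers $\hat{T}$ and $\hat{\pi}$ through the standard \TD\ Step~3 pseudo-inverse formulas, for which the perturbation bounds are already available in the cited work, and the stabilization procedure is left unanalyzed as a practical heuristic for model mismatch. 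Your route is arguably more faithful to the algorithm as actually implemented, but it buys that fidelity at the cost of an unproven stability lemma; if you want a complete proof along the paper's lines, substitute the standard recovery for that component, or else supply a strong-convexity-on-the-range-of-$C\otimes C$ argument to make the least-squares continuity quantitative.
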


%\begin{theorem}[Statistical Consistency of \FTD]
%Suppose \FTD receives $m$ iid samples $(x_1, x_2, x_3)$ as input, which are drawn from a binomial hidden Markov model represented by parameters $(\pi, T, p_c, p)$. In addition, suppose the distribution on coverage $c$ is such that $a := \E[\frac 1 {c+2}] \leq \frac 3 8$.
% Then, given learning parameters $\epsilon$ and $\delta$ in $(0,\frac 1 {16})$, if the number of samples $m$ is at least $\poly( \frac{1}{\min_i \pi_i}, \frac{1}{\sigma_\mi(C)}, \frac{1}{\sigma_\mi(T)}, \frac{1}{\epsilon}, \ln \frac{1}{\delta})$,
% and the Beta feature map dimension $D$ is $\frac{16}{\epsilon}$,
%then with probability $1-\delta$, the output $\hat{p}$, $\hat{T}$ and $\hat{\pi}$ satisfies that
%\[ \| p - \hat{p} \Pi \|_2 \leq \epsilon \]
%\[ \| T - \Pi^\dagger \hat{T} \Pi \|_F \leq \epsilon \]
%\[ \| \pi - \Pi^\dagger \hat  \pi \|_2 \leq \epsilon \]
%for some permutation matrix $\Pi$. Here $\sigma_\mi(M)$ denotes the minimum singular value of matrix $M$.
%\label{thm:mpc}
%\end{theorem}

%We show in this section that under assumptions on the separation between the
%methylation probabilities given the states, and the discretization granularity of our Beta feature map,
%the expected feature map matrix will be guaranteed full rank, thus fulfilling  Assumption~\ref{assumption:rcfm}.

\section{Experiments}

%We now evaluate the empirical performance of \FTD. Since \FTD\ does unsupervised learning, we will compare accuracy on synthetic data where the underlying generative parameters are known.
%In particular, we aim to answer the following three questions:

%We now evaluate the empirical performance of \FTD; in particular, we aim to answer the following questions:

%\begin{enumerate}
%\item How does the performance of \FTD compare against EM in terms of accuracy and running time?
%\item How does the accuracy of \FTD vary as a function of the sample size and coverage?
%\item How does \FTD perform on real DNA methylation data?
%\end{enumerate}

%Methods: (what is the experimental setup, what are your baselines, decoding procedure)
%Synthetic Data: (answer questions 1 and 2)
%Real Data Experiments:
%Discussion
%Since \FTD\ does unsupervised learning, will evaluate the \FTD\ algorithm on both synthetic and real data;
%To ensure that the true generative parameters are available for measuring accuracy, Questions 1 and 2 will be addressed in the context of synthetic data; for Question 3, we will look at real DNA Methylation data available from \eran{where?}

%Since \FTD\ does unsupervised learning, the first two questions will be addressed in the context of synthetic data with known generative parameters. \kc{this needs to be explained better}

%We now evaluate the empirical performance of \FTD.

 Our evaluation of the empirical performance of \FTD\ has two major goals. First, we aim to validate our theoretical results by examining how it performs against EM when data is truly generated from a Binomial HMM. Real data typically has model mismatch, and our second goal is to investigate how \FTD\ performs under model mismatch by comparing it with EM on real DNA methylation data.

\subsection{Validation on Synthetic Data}

%We begin by comparing estimation error on synthetic data, where the underlying generative parameters are known.

%\medskip\noindent{\textbf{Data Generation.}}

We begin with simulations on synthetic data with known underlying generative parameters.

%\subsubsection{Data Generation}

\mypara{Data Generation.} We generate a single sequence binomial HMM with four hidden states. For each state, we generate a methylation probability $p_h$. To ensure there is a gap in the probabilities, $p_1$ and $p_2$ are uniformly drawn from $[0,0.3]$, and $p_3$ and $p_4$ from $[0.7,1]$. We generate a transition matrix $T$ as: $T = 0.2 I_4 + 0.8 U$, where all elements of matrix $U$ are drawn independently and uniformly from $[0,1]$, and then each column is normalized to sum to 1. We generate the initial probability vector $\pi$ by normalizing a random vector $u$, with entries drawn uniformly and independently from $[0,1]$. We consider three coverage settings -- the coverage $c_t$ drawn from a Poisson with means $25$ (low coverage), $50$ (medium coverage) and $100$ (high coverage). For each set of parameters, we draw $8$ sequences of size from $\cbr{2^7, \ldots, 2^{13}}$.

%We run our synthetic experiments on two cell types. We randomly generate a methylation probability matrix $p$ of size $2 \times 4$, for two cell types and four hidden states. Each state indicates a different pattern of differential methylation: (high, high), (high, low), (low, high) and (low, low).
%For a high methylation probablity, we randomly draw a number uniformly from $[0.7,1]$; for a low methylation probablity, we randomly draw a number uniformly from $[0, 0.3]$.
%We generate a transition matrix $T$ using the following formula: $T = 0.2 I_4 + 0.8 U$. Matrix $U$ is obtained by first drawing all the elements independently and uniformly from $[0,1]$, then normalize each column to sum to 1. We generate the initial probability vector $\pi$ by normalizing a random vector $u$, where each of its entries is drawn independently from $[0,1]$ uniformly at random.
% The coverages are drawn from a Possion distribution with mean $25$.

%Each sequence is of size $16 \times 10^6$, each position containing $(c,\mu)$ counts. Specifically,
%for each sequence, we have 16 contexts, each of length $10^6$ . Each context starts with ```C'' and varies the last two charaters from ``C, A, T, G'', i.e. ``CCC'', ``CCA'', \ldots, ``CGT'', ``CGG''. We group the contexts by their first two characters, i.e. ``CC'', ``CA'', ``CT'' and ``CG'', and merge the observations within each group separately, getting a sequence of size $4 \times 10^6$.

%\medskip\noindent{\textbf{Algorithms.}}

%\subsubsection{Methodology}

\mypara{Methodology.} We compare $\FTD$ with EM. Both algorithms have a hyperparameter -- the number of states $m$ that we set to the correct value $4$. For \FTD\ the number of tensor power iterations per component is $2$ and the granularity for Beta Map is $30$. EM is stopped at iteration $t$ if the fractional decrease in the log-likelihood is below $0.001$.

Recall that neither \FTD\ nor EM produces the hidden states in correct order, and hence to evaluate the estimation error, we need to match the estimated states with the true ones. We do so by using the Hungarian algorithm to find the minimum cost matching between states where the cost of matching a state $p$ to $\hat{p}$ is the difference $|p - \hat{p}|$. The estimation error is defined as the cost of the best cost matching. We repeat the experiment for $20$ trials, and plot the mean and the standard error of the estimation error in Figure~\ref{fig:synthetic-error}. The running times are reported in Table~\ref{tab:runtime}.

% measuring the pairwise error of two vectors using the $\ell_2$ norm of their difference, and then use the Hungarian algorithm to find the minimum weight matching that minimizes the total pairwise error.

%After finding the permutation, we compute the norm of the difference of the methylation probablity vectors $\| \hat{p} - p \|$ as the estimation error.

%\subsubsection{Results}
\mypara{Results.}
From Figure~\ref{fig:synthetic-error}, we observe that for small training sample size, EM and \FTD have comparable estimation error, whereas for large sample size ($>$ 1000), \FTD performs substantially better.
In general, we find that \FTD is robust as our theory predicts, while the results of EM depend strongly on initialization.
 Table~\ref{tab:runtime} illustrates that \FTD is an order of magnitude faster than EM for all sample sizes, which means that it can be indeed be run on larger datasets. We remark that the average running time is not strictly monotonic increasing with sample size, which may be attributed to the fact that we use caching of the feature maps to speed up the implementation.
 These observations indicate that \FTD should be preferred when large amounts of data are available, while EM may be used for smaller data problems. An additional observation is that the estimation error under EM typically has higher variance than \FTD\ for larger sample sizes. Finally, the degree of coverage does not appear to make a big difference to the results.

 %Extrapolating to the full genome scale (of sample size $10^9$), we expect \FTD to finish within $10^3$ seconds, whereaans EM requires at least $10^5$ seconds.

%One possible explanation is that, the result of EM depends strongly on initialization, while FTD is more robust as theory p.
%(at least in the regime considered in the experiments).

%\begin{tabular}{llllllll}
%\hline
%Algorithms  & 128 & 256 & 512 & 1024 & 2048 & 4096 & 8192 \\
%\hline
%\FTD & 0.6758 &  0.9948 &  1.4427 &  1.8173 &  1.4253 & 2.7181 & 4.2342 \\
%EM & 5.838 & 5.718 &  7.713 &  17.468 & 28.105 &  62.430 & 82.218 \\
%\end{tabular}
%

\begin{figure}[t]
  \centering
    \begin{tabular}{c}
        \begin{subfigure}[t]{0.3\textwidth}
          \includegraphics[width=\textwidth]{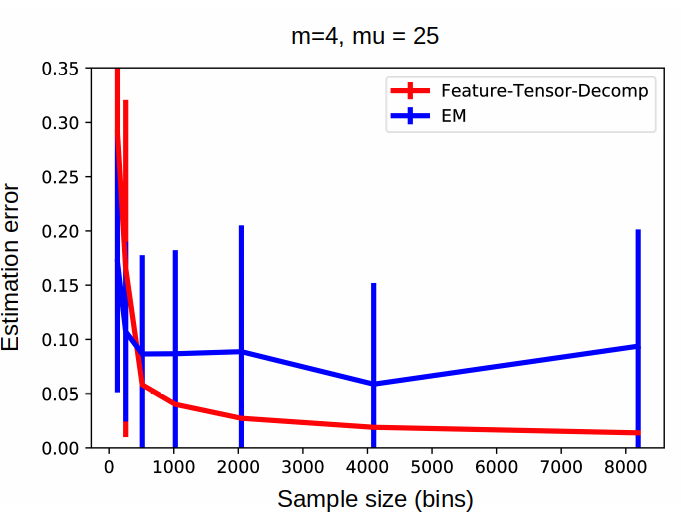}
        \end{subfigure}
        \\
        \begin{subfigure}[t]{0.3\textwidth}
          \includegraphics[width=\textwidth]{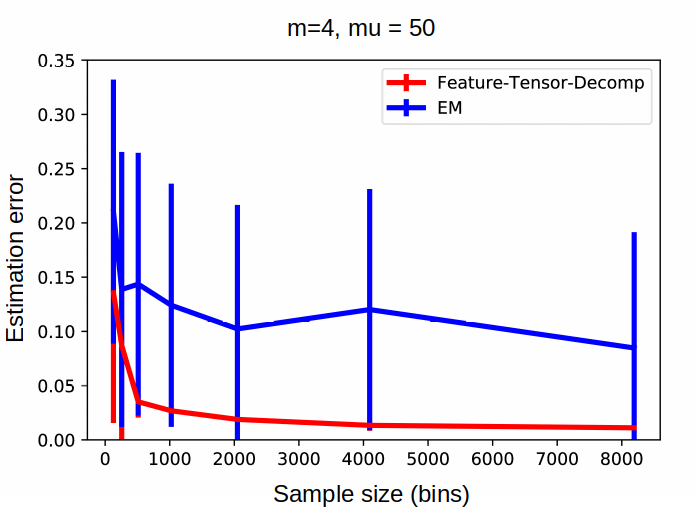}
        \end{subfigure}
        \\
        \begin{subfigure}[t]{0.3\textwidth}
          \includegraphics[width=\textwidth]{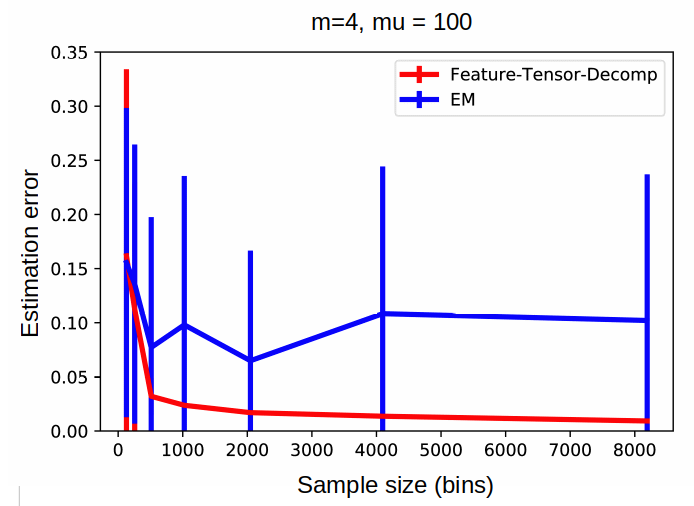}
        \end{subfigure}
      \end{tabular}
      \caption{Estimation error vs sample size for \FTD (red) and EM (blue). From top to bottom: three settings with varying mean coverage parameters: 25, 50 and 100. All results are averaged over 20 repeated trials; the error bars
      represent the standard deviation of the estimation error value over these trials.}
      \label{fig:synthetic-error}
\end{figure}

\begin{table}
  \centering
\begin{tabular}{|lll|}
  \hline
   Sample Size $\backslash$ Algorithm & FTD & EM \\
   \hline
   128 & 0.6758 & 5.838 \\
   256 & 0.9948 & 5.718 \\
   512 & 1.4427 & 7.713 \\
   1024 & 1.8173 & 17.468 \\
   2048 & 1.4253 & 28.105 \\
   4096 & 2.7181 & 62.430 \\
   8192 & 4.2342 & 82.218 \\
   \hline
\end{tabular}
\caption{Average running time (in seconds) for EM and \FTD with varying training sample sizes. All results are averaged over 20 repeated trials. We only present the result when the average coverage is 25; the cases when the average coverage are 50, 100 are similar.}
\label{tab:runtime}
\end{table}

%\begin{figure}
%\centering
%\includegraphics[width=0.3\textwidth]{}
%\caption{Average running time (in seconds) for EM and \FTD with varying training sample sizes, along with their polynomial fit lines. All results are averaged over 20 repeated trials. We only present the result when the average coverage is 25; the cases when the average coverage are 50, 100 are similar.}
%\label{fig:runtime}
%\end{figure}

%Sample Size $\backslash$
%EM works better for a small sample size, while the statistical consistent $\TD$ has a better performance with a larger sample size.

%\begin{figure}
%\centering
%\includegraphics[scale=0.15]{}
%\caption{Error versus sample size for $\TD$ and EM.}
%\label{fig:synthetic-error}
%\end{figure}
\subsection{Real Data Experiments}

We next investigate how \FTD\ performs on real DNA methylation data.

%\subsubsection{Data}

\mypara{Data.} We use the DNA methylation dataset from~\cite{Mo2015-gq}, and use chromosome 1 of the mouse
 genome. The data has a pair of (coverage, methylation) counts for every bin consisting of $100$ DNA base pairs.  We select two cell types -- excitatory neurons (E) and VIP cells (V) -- which are known to have differential methylation. The data has two replicate sequences for each cell type; we merge them by adding the coverage and methylation counts at each location. The data also has contextual information for each position that is derived from the underlying DNA sequence. We are specifically interested in the CG context that is of biological relevance. We extract the subsequence that is restricted to the CG context, which gives us a sequence of length 1923719.
%\cz{We do not ignore those bins with small or no coverage counts.}

%Specifically, biologists are interested in methylation in the following 16 contexts -- CAA, CAG, CAC, CAT, CGA, ..., CTT. We group the contexts by their first two base pairs to meta-contexts (CC, CA, CT and CG), and add up the observations within each group separately;

%Every 100 DNA base pairs, a pair of (coverage, methylation) counts is observed.
% There are three cell types (Excitatory neurons (E), PV cells (P), and VIP cells (V)) with two replicates each, making a total of six sequences. We merge the observations of two replicates from the same cell type as follows: if at position $t$, replicate 1 has value $(c, \mu)$ and replicate 2 has $(c', \mu')$, then the merged sequence has value $(c+c',\mu+\mu')$.  This results in three sequences as the input data.
%In the data, each position also has some contextual information that is derived from the DNA sequence whose methylation we are investigating. Specifically, biologists are interested in methylation that occurs in the following 16 contexts -- CAA, CAG, CAC, CAT, CGA, ..., CTT.
% We group the contexts by their first two base pairs to meta-contexts (CC, CA, CT and CG), and add up the observations within each group separately, getting sequences of observations in 4 meta-contexts.

%\subsection{Evaluation Methology}

\mypara{Methodology.} For real data, we compare three algorithms -- \FTD (FTD), EM and \FTD\ followed by $3$ rounds of EM (FTD+EM). The motivation for studying the last algorithm is that it has been previously reported to have good performance on real data~\cite{Zhang2014,Chaganty2013}. We set the number of hidden states to $m=6$. As running EM until convergence on this long sequence is slow, we run it for $10$ iterations and report the results.

Evaluation on real data is more challenging since we do not have the ground truth parameters. We get around this by using the following two measures. First, we compute the log-likelihood of the estimated parameters over a separate test set. Second, we look at the cell types E and V which are known to have differential methylation regions in the CG context, and verify that this is reflected in our results. Finally, for aligned sequences from multiple cell types, the observation probabilities given the hidden states are computed under a conditional independence assumption.

%For example, for two cell types with coverages $c_{t,1}$, $c_{t,2}$ at position $t$, we assume that the corresponding methylation counts $\mu_{t,1}, \mu_{t,2}$ depend only on the respective coverages. Formally: $\P(\mu_{t,1}, \mu_{t,2} | c_{t,1}, c_{t,2}, h) = \P(\mu_{t,1} | c_{t,1}, h) \cdot \P(\mu_{t,2} | c_{t,2}, h)$.

%We study segmentation of sequence from cell types E and V, with CG context, with a focus on identifying the differential methylation region between the two cell types.

\subsection{Results}

Figure~\ref{fig:realll} plots the log-likelihood on a test set under EM, FTD and FTD+EM. The running times are reported in Table~\ref{tab:real}, and the estimated methylation probability matrices are plotted in Figure~\ref{fig:methylationmat}.

From Figure~\ref{fig:realll}, we see that EM achieves slightly better log-likelihood than FTD and FTD+EM. This is not surprising since EM directly maximizes the likelihood unlike FTD. We see from Table~\ref{tab:real} that for the sample size of $4 \times 10^4$, FTD is two orders of magnitude faster than that of EM. The bottleneck in FTD+EM's running time is the 3 rounds of EM. In addition, FTD processes almost the whole sequence (of length $1.6 \times 10^6$) in a relatively short amount of time ($<$280s), which is faster than EM on a sample of size $4 \times 10^4$ ($>$1175s).

In Figure~\ref{fig:methylationmat}, we permute the columns of each estimated methylation probability matrix to make them as aligned as possible. We mark the aligned states using vertical lines between the columns. We can draw a few conclusions from the estimated matrices: first, all algorithms identify a hidden state with extremely low methylation level on both cells (column 1, $<$0.025) and a hidden state with extremely high methylation level on both cells (column 2, $>$0.91). Second, all algorithms identify a hidden state with low methylation on cell E and high methylation on cell V (column 6), although the difference is most stark for \FTD\ run on a sample size of $1.6 \times 10^6$. Third, the EM based algorithms (FTD and FTD+EM) find a state that has relatively high methylation on cell E and low methylation on cell V (column 5), whereas FTD does not identify this state. FTD+EM appears to achieve the best of both worlds -- relatively low running time with correctly identified states.

%\eran{Can you look at this matrix and comment on the parameters? Also any other comments on the experiments section will be highly appreciated. }

\begin{table}
  \centering
\begin{tabular}{|ll|}
\hline
Algorithm (sample size) & Running Time (s) \\
\hline
FTD($4 \times 10^4$) &  11.608894 \\
FTD+ EM($4 \times 10^4$) & 357.817675 \\
EM($4 \times 10^4$) &  1125.509846 \\
FTD($1.6\times10^6$) &  274.1687 \\
\hline
\end{tabular}
\caption{Running Time for a set of algorithm and sample size pairs.}
\label{tab:real}
\end{table}

%\begin{table}
%\begin{tabular}{lllll}
%\hline
%Algorithm (sample size) & \FTD($10^4$) & \FTD+EM($10^4$) & EM($10^4$) & \FTD($1.6\times10^6$) \\
%\hline
%Running Time (s) &  4.4812 &  98.8814 &  272.829181 &  274.1687  \\
%\end{tabular}
%\end{table}

% 40000 samples: 11.608894, 357.817675, 1125.509846
% Spectral: 274.168665 (1.6*10^6) 147.899677 (8*10^5) 77.463324(4*10^5) 39.504473 (2*10^5)
\begin{figure}
\centering
\includegraphics[scale=0.45]{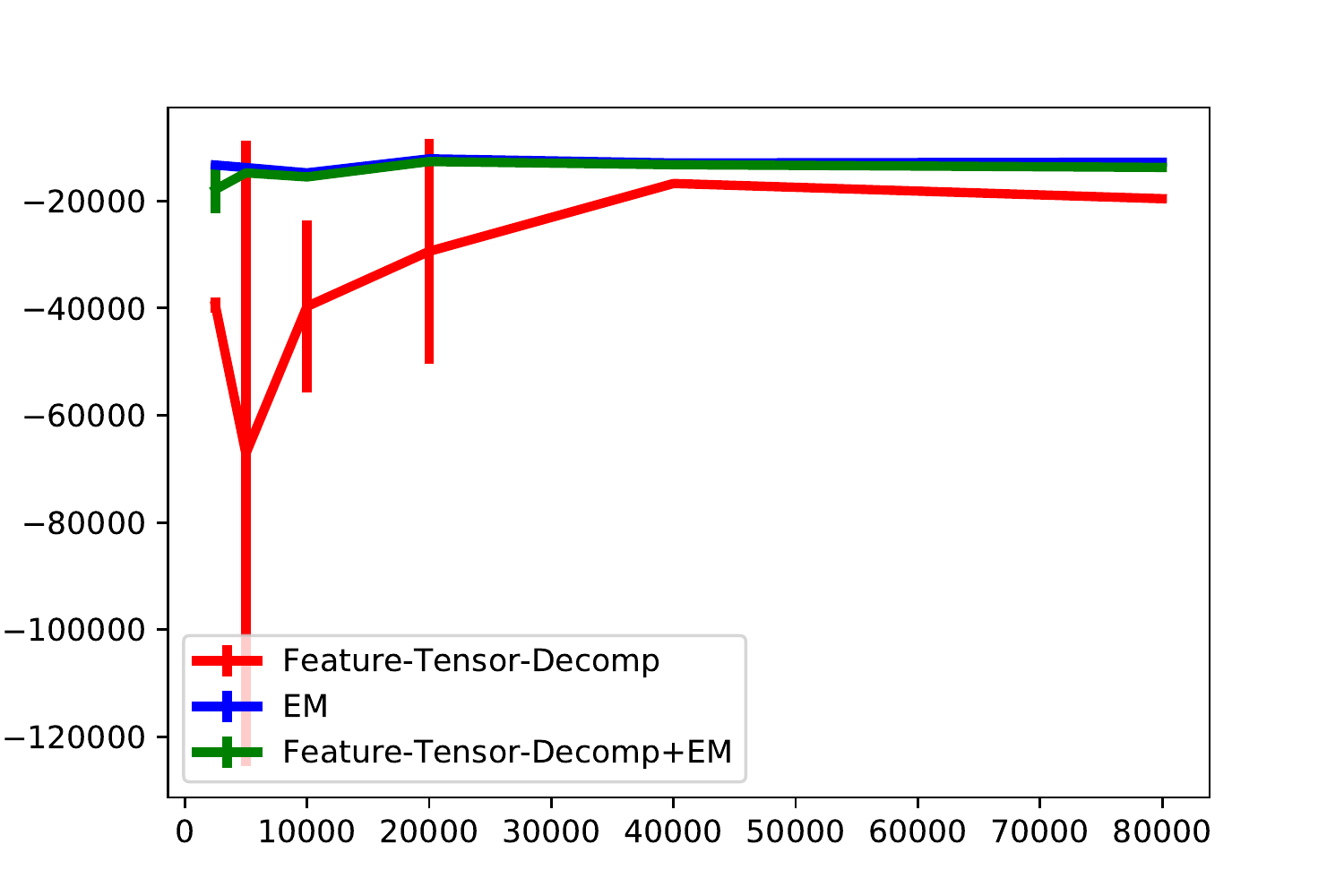}
\caption{Test log-likelihood vs training sample Size for algorithms \FTD (red), EM (blue), and EM with \FTD initialization (green).}
\label{fig:realll}
\end{figure}

\begin{figure}
  \centering
  \scalebox{0.8}
  {
\begin{tabular}{rccccccl}
  %EM
  \multirow{2}{*}{ \Big[} & 0.023 & 0.953 & 0.229  & 0.859 & 0.709 &  0.583 &\multirow{2}{*}{ \Big] EM($4\times10^4$)} \\
  & 0.023 & 0.954 & 0.476   & 0.781 & 0.457 & 0.903 & \\
  & $\vert$ & $\vert$ & & & & $\vert$ \\
  %Spectral EM
  \multirow{2}{*}{ \Big[} & 0.023 & 0.955 & 0.749  & 0.472 & 0.858 & 0.168 & \multirow{2}{*}{ \Big] FTD + EM($4\times10^4$)}\\
  & 0.026 & 0.948 & 0.827  & 0.379 & 0.419 & 0.811 & \\
  & $\vert$ & $\vert$ & & & & $\vert$ \\
  %Spectral (small sample)
  \multirow{2}{*}{ \Big[} & 0.01 & 0.916 & 0.298 & 0.345 & 0.649 & 0.206 & \multirow{2}{*}{ \Big] FTD($4\times10^4$)}\\
  & 0.01 & 0.925 & 0.257 & 0.450 & 0.880 & 0.535 & \\
  & $\vert$ & $\vert$ & & & & $\vert$ \\
  %Spectral (large sample)
  \multirow{2}{*}{ \Big[} & 0.01 & 0.99 & 0.598 & 0.838 & 0.950 & 0.132 & \multirow{2}{*}{ \Big] FTD($1.6\times10^6$)} \\
  & 0.01 & 0.99 & 0.519 & 0.990 & 0.913 & 0.752 & \\

\end{tabular}
}
\caption{The methylation probability matrices (of size $2 \times 6$) for each algorithm on datasets of different sizes. Row 1: Cell E, Row 2: Cell V.}
\label{fig:methylationmat}
\end{figure}

%\begin{figure}
%\includegraphics[scale=0.15]{}
%\caption{The estimated expected feature map on two cell types given hidden states. The left figure represents cell type E and the right figure represents cell type V. The colors of two plots are made consistent so that each color represents a hidden state.}
%\label{fig:fm-dm}
%\end{figure}

%\begin{figure}
%    \begin{tabular}{ccc}
%        \begin{subfigure}[t]{0.3\textwidth}
%          \includegraphics[width=\textwidth]{}
%        \end{subfigure}
%        &
%        \begin{subfigure}[t]{0.3\textwidth}
%          \includegraphics[width=\textwidth]{}
%        \end{subfigure}
%        &
%        \begin{subfigure}[t]{0.3\textwidth}
%          \includegraphics[width=\textwidth]{}
%        \end{subfigure}
%      \end{tabular}
%      \caption{Estimated methylation probability matrix ($2 \times 6$), transition matrix ($6 \times 6$) and initial probability vector ($1 \times 6$).}
%      \label{fig:params-dm}
%\end{figure}

%\begin{figure}
%\includegraphics[width=0.01\textwidth]{}
%\caption{Actual observations and decoded states. The first three lines are methylation, coverage and methylation rate for cell type E; the next three are same measures for cell type V. The last six lines represented states learned - a position is blue in exactly one of the six lines, indicating that it is decoded to that state.}
%\label{fig:decode-dm}
%\end{figure}

%\subsection{Discussion}

\section{Related Work}
%- DNA methylation and differential methylation
%  - bioinformatics literature

%- Binomial HMMs
%  - Perhaps Eran has more suitable examples in the bioinformatics context?
%  - http://wwwf.imperial.ac.uk/~mdavis/docs/HiddenMarkovModel.pdf
%  - https://www.researchgate.net/profile/Jean-Marie_Rolin/publication/2677995_Hidden_Markov_Models_and_Their_Mixtures/links/00b4951f7cc74bec89000000/Hidden-Markov-Models-and-Their-Mixtures.pdf

%- Spectral learning for HMMs (or more generally, latent variable models)
%  - AGHKT
%  - Le Song's paper
%  - Spectacle paper
%  - Our 2015 paper (also studies multiple cell types)
%  - Contrastive HMM by James Zou (NIPS 2013): they analyze asymmetric forground background data, where in our data, the roles of the two cell types are symmetric.
%  - Spectral Methods + EM for Crowdsourcing (Zhang, Chen, Zhou, Jordan) tangential?

There has been a large body of work on hidden Markov models with structured
emission distributions~\cite{Rabiner1989}. The binomial hidden Markov model has
been proposed in \cite{couvreur1996}, and is used in \cite{giampieri2005} for
financial applications.

A line of work has developed spectral learning algorithms for latent variable
models, including HMMs. ~\cite{AGHKT12} provides an elegant algorithm for
categorical HMMs based on moment matching and tensor decomposition, which
provably recovers the parameters when the training data is generated from an
HMM.  The algorithm provides no guarantees under model mismatch, and
works only for categorical HMMs, which has a finite output space. \cite{Song2015}
applies this algorithm to Chip-Seq data to recover underlying states.
\cite{SADX14} proposes kernelized versions of spectral methods that generalize
the algorithm of~\cite{AGHKT12} to handle data with rich observation spaces;
however, a direct application of \cite{SADX14}'s algorithm has running time quadratic
in the sample size, thus making it prohibitive for methylation data.
%\kc{CZ, check id
%this is true?}
%\cz{A naive application of their paper would need quadratic time $O(N^2)$;
%Strictly speaking, I believe it is feasible to modify their algorithm (in page 7 of their paper)
% a bit (subsampling the projection) to make it to have $O(N m^2)$, where $m$ is the  size of the projection subsample.
% }
\cite{zhang2015spectral} studies spectral learning for a
structured HMM with multiple cell types, namely HMM with tree hidden states
(HMM-THS)~\cite{Biesinger2013}. Their model is different from ours, in that our
HMM has only one state controlling the observations at each time, whereas
HMM-THS has one state per sequence.  \cite{Zou2013} studies the setting where
there are two sequences, foreground and background, and the goal is to find
states that appear exclusively in the foreground sequence.  Their problem
setting is different from ours, in that the roles of the two sequences are
asymmetric, unlike ours.
%whereas the roles of the two states are the same in our setting.
Finally, it has been observed both empirically and theoretically in the
spectral learning literature that EM initialized with the output of the
spectral methods achieve better accuracy in many tasks, such as
crowdsourcing~\cite{Zhang2014} and in learning a mixture of linear
regressors~\cite{Chaganty2013}.

%- Tensor Decomposition
%  - Kolda and Bader
%  - Kuleshov, Chaganty and Liang, joint diagnolization
%  - Colombo, Vlassis, Schur decomposition

%\section{Conclusion}
%To be written...

\subsection*{Acknowledgments}
We thank NSF under IIS-1617157 for research support.
EAM is supported by NIH/NINDS (NS080911).
\bibliographystyle{plain}
\bibliography{reference}

\newpage
\appendix
\section{Sample Complexity - Proof of Theorem~\ref{thm:mpc}}

%Then by Theorem~\ref{thm:mpcformal},

We present Theorem~\ref{thm:mpcformal} below, which immediately implies Theorem~\ref{thm:mpc}.
To see this, suppose the coverage $c$ is $\geq \frac{512}{q^2}$, and we are given
given learning parameters $\epsilon, \delta \in (0,1)$.
Now, set $D_0 = \max(\frac 4 q, \frac {16} \epsilon)$. By Theorem~\ref{thm:rcfm},
for all $D \geq D_0$, $\sigma_\mi(C) \geq \frac{1}{2\sqrt{D}}$.
Given such $D$, by Theorem~\ref{thm:mpcformal}, we can find a value of $N_0 = \poly( \frac{1}{\min_i \pi_i}, D, \frac{1}{\sigma_\mi(T)}, \frac{1}{\epsilon}, \ln \frac{1}{\delta})$,
 such that for every
$N \geq N_0$, with probability $1-\delta$,
the distances between $\hat{p}$, $\hat{T}$ and $\hat{\pi}$ and the respective underlying parameters $\pi, T, p$ are all at most $\epsilon$ in terms of the respective error metrics.

\begin{theorem}[Statistical Consistency of \FTD]
Suppose \FTD receives $M$ iid samples $(x_1, x_2, x_3)$ as input, which are drawn from a binomial hidden Markov model represented by parameters $(\pi, T, p_c, p)$. In addition, suppose the distribution on coverage $c$ is such that $a := \E[\frac 1 {c+2}] \leq \frac 3 8$.
Then, given learning parameters $\epsilon$ and $\delta$ in $(0,\frac 1 {16})$, if
the Beta feature map dimension $D$ is $\frac{16}{\epsilon}$, and
the number of samples $M$ is at least $\poly( \frac{1}{\min_i \pi_i}, \frac{1}{\sigma_\mi(C)}, \frac{1}{\sigma_\mi(T)}, \frac{1}{\epsilon}, \ln \frac{1}{\delta}, D)$,
then with probability $1-\delta$, the output $\hat{p}$, $\hat{T}$ and $\hat{\pi}$ satisfies that
\[ \| p - \hat{p} \Pi \|_2 \leq \epsilon \]
\[ \| T - \Pi^\dagger \hat{T} \Pi \|_F \leq \epsilon \]
\[ \| \pi - \Pi^\dagger \hat  \pi \|_2 \leq \epsilon \]
for some permutation matrix $\Pi$. Here $\sigma_\mi(M)$ denotes the minimum singular value of matrix $M$.
\label{thm:mpcformal}
\end{theorem}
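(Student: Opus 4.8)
The plan is to split the total error into three sources and bound each in turn: (i) the statistical error from estimating the feature co-occurrence matrices $\hat P_{i,j}$ and the tensor $\hat\T$ from $M$ finite samples; (ii) the propagation of this error through the symmetrization, whitening, and robust tensor power method steps, which I would import essentially unchanged from the perturbation analysis of~\cite{AGHKT12}; and (iii) the genuinely new error incurred when converting the recovered feature-map matrix $\hat C$ into the binomial probabilities $\hat p$ via~\eqref{eqn:recoverp}, together with the error of the stabilization step for $\hat T$ and $\hat\pi$. As a preliminary I would establish the population identity: by the conditional independence of $x_1,x_2,x_3$ given the middle hidden state $h_2$ in an HMM, $\T = \sum_{k=1}^m \P(h_2=k)\, C_1^k \otimes C^k \otimes C_3^k$, where $C$ is the middle-view matrix $\E[\phi(x)\mid h]$ and $C_1,C_3$ are the view matrices for the neighboring observations. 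The symmetrization $\G=\T(S_1,I,S_3)$ maps all three views to $C$, so after whitening by $W$ the tensor $\H$ admits an orthogonal symmetric decomposition whose components recover the columns of $C$ (up to a permutation $\Pi$) and whose eigenvalues recover $\P(h_2=k)^{-1/2}$.

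For step (i) I would use the fact that the Beta map sends every observation to a subprobability vector: each $\phi_{\bet,D}(x)$ has nonnegative entries summing to at most one, so $\|\phi(x)\|_1\le 1$ and all the relevant moments are bounded. A matrix Bernstein inequality for $\hat P_{i,j}-P_{i,j}$ and its tensor analogue for $\hat\T-\T$, with a union bound, then give perturbations of order $\tilde O(1/\sqrt M)$ with probability $1-\delta$. For step (ii) I would track these perturbations through the pseudoinverses defining $S_1,S_3$ and through $J$ and $W$; the conditioning here is governed by $\sigma_\mi(C)$, $\sigma_\mi(T)$ and $\min_i\pi_i$, which is exactly where the stated $\poly(1/\sigma_\mi(C),1/\sigma_\mi(T),1/\min_i\pi_i)$ dependence enters. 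Invoking the robust tensor power method guarantee of~\cite{AGHKT12} on $\hat\H$ then yields that, up to the permutation $\Pi$, each recovered column of $\hat C$ is within $\mathrm{err}_C = \poly(\cdots)/\sqrt M$ of the corresponding column of $C$.

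The new part is step (iii). For $\hat p$ I would split the error into a discretization error and a statistical error. By the derivation preceding~\eqref{eqn:recoverp} — the mean of the mixture-of-Betas density encoded in column $h$ of $C$ is the weighted average of the component means $\tfrac{\mu+1}{c+2}$ — the numerator functional $\tfrac1D\sum_{i=1}^D\tfrac iD (C)_{i,h}$ equals $\E[\tfrac{\mu+1}{c+2}\mid h]$ up to an $O(1/D)$ discretization error from replacing each Beta density by its integral over a width-$1/D$ cell; with $D=16/\epsilon$ this is $O(\epsilon)$. Under the assumed independence of $c$ and $h$ this conditional mean equals $a+(1-2a)p_h$ with $a=\E[\tfrac1{c+2}]$, so the population recovery is exact. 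The statistical error has two pieces: replacing $C$ by $\hat C$ (a bounded linear functional of a single column, contributing $O(\mathrm{err}_C)$) and replacing $a$ by $\hat a$ (which concentrates at rate $1/\sqrt M$). Because the hypothesis $a\le\tfrac38$ forces $1-2a\ge\tfrac14$, the division by $1-2\hat a$ cannot amplify these errors, and I conclude $\|p-\hat p\Pi\|_2\le\epsilon$. For $\hat T$ and $\hat\pi$ I would analyze the stabilization least squares: the true second-moment matrix $H_{2,1}$ is nearly feasible and nearly optimal for the constrained program, and since $\hat C$ is full column rank (Assumption~\ref{assumption:rcfm}, quantified by $\sigma_\mi(C)\ge\tfrac1{2\sqrt D}$ through Theorem~\ref{thm:rcfm}) the quadratic map $H\mapsto \hat C H\hat C^T$ is well-conditioned; a perturbation bound transfers the $O(\mathrm{err}_C)+O(1/\sqrt M)$ error to $\hat H_{2,1}$, hence to $\hat\pi=\one^T\hat H_{2,1}$ and $\hat T=\hat H_{2,1}\diag(\hat\pi)^{-1}$, using $\min_i\pi_i$ to control the inverse.

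I expect the main obstacle to be the bookkeeping in step (ii): carefully lower-bounding the minimum singular values of all intermediate matrices ($S_1,S_3,J,W$) after perturbation, so that the condition-number factors stay polynomially bounded and the spectral-gap hypothesis of the tensor power method is met. The conceptually novel difficulty is that the two approximation regimes pull against each other — the discretization error in step (iii) shrinks only as $D$ grows, while a larger $D$ decreases $\sigma_\mi(C)\ge\tfrac1{2\sqrt D}$ and thus inflates the constants in step (ii). Balancing the choice $D=16/\epsilon$ against the required sample size is precisely what the $\poly(\cdots,D)$ dependence in the sample-complexity bound encodes.
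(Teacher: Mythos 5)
Your decomposition into (i) moment concentration, (ii) imported perturbation analysis of the symmetrize/whiten/power-method pipeline, and (iii) a new argument converting $\hat C$ into $\hat p$, is exactly the architecture of the paper's proof: the paper handles (i) via an $\ell_1$-boundedness argument for the Beta map (its Lemma~\ref{lem:l1conc}, proved with McDiarmid rather than your matrix Bernstein --- both work, since $\|\phi_{\bet,D}(x)\|_1\le 1$), packages (ii) as a citation to the guarantee of \cite{zhang2015spectral}/\cite{AGHKT12} (Theorem~\ref{thm:efmc}), and carries out (iii) exactly as you describe via the population identity of Lemma~\ref{lem:ph}, an $O(1/D)$ discretization bound, and the $1-2a\ge\frac14$ denominator control (Lemma~\ref{lem:D-large}). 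The one place you genuinely diverge is the recovery of $\hat T$ and $\hat\pi$: you propose to analyze the constrained least-squares stabilization program of \Idea~4, arguing that $H_{2,1}$ is nearly optimal and that full column rank of $\hat C$ makes the map $H\mapsto\hat C H\hat C^T$ well-conditioned. The paper does not do this; its Theorem~\ref{thm:efmc} simply asserts the $\hat T$, $\hat\pi$ bounds for the \emph{standard} (unstabilized) Step~3 recovery, imported wholesale from the prior work. Your route is more faithful to the algorithm as actually described, but it commits you to extra work the paper never carries out --- a perturbation analysis of a constrained quadratic program, including showing that the nonnegativity and sum-to-one constraints do not push the minimizer far from $H_{2,1}$ --- and that analysis is only sketched in your proposal; if you pursue it you should make the strong-convexity constant of $H\mapsto\|\hat P_{2,1}-\hat C H\hat C^T\|_F^2$ explicit in terms of $\sigma_\mi(\hat C)^4$. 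Your closing observation about the tension between large $D$ (shrinking discretization error) and the $\sigma_\mi(C)\ge\frac{1}{2\sqrt D}$ lower bound (inflating the sample complexity) is correct and is precisely why the paper's bound is polynomial in both $D$ and $1/\sigma_\mi(C)$.
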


\subsection{Recovering Initial Probability, Transition Matrix and Expected Feature Map}
To prove Theorem~\ref{thm:mpc}, we will apply the sample complexity bounds of \TD for HMM~\cite{AGHKT12}. Specifically, we will apply a result implicit in~\cite{AGHKT12}, which appears explicitly in~\cite{zhang2015spectral}.

\begin{theorem}[Initial Probability, Transition Matrix and Expected Feature Map Consistency]
Suppose \FTD receives $m$ iid samples $(x_1, x_2, x_3)$ as input, which are drawn from a categorical hidden Markov model represented by parameters $(\pi, T, O)$. Then, given parameters $\epsilon$ and $\delta$ in $(0,1)$, if the number of samples $M$ is at least $\poly( \frac{1}{\min_i \pi_i}, \frac{1}{\sigma_\mi(C)}, \frac{1}{\sigma_\mi(T)}, \frac{1}{\epsilon}, \ln \frac{1}{\delta})$,
then with probability $1-\delta$, \FTD produces estimated expected feature map $\hat{C}$, transition matrix $\hat{T}$ and initial probability $\hat{\pi}$ such that
\[ \| C - \hat{C} \Pi \|_F \leq \epsilon \]
\[ \| T - \Pi^\dagger \hat{T} \Pi \|_F \leq \epsilon \]
\[ \| \pi - \Pi^\dagger \hat  \pi \|_2 \leq \epsilon \]
for some permutation matrix $\Pi$. Here $\sigma_\mi(M)$ denotes the minimum singular value of a matrix $M$.
\label{thm:efmc}
\end{theorem}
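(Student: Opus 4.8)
The plan is to prove Theorem~\ref{thm:efmc} as a \emph{reduction} to the sample-complexity analysis of \TD\ for categorical HMMs, exploiting the fact that \FTD\ is nothing but \TD\ run on the feature vectors $\phi(x_t)$ in place of raw observations. The single structural fact that makes this reduction go through is that, in an HMM, the observations $x_1,x_2,x_3$ are conditionally independent given the hidden trajectory $(h_1,h_2,h_3)$. Consequently the feature cooccurrence matrices and tensor factor through the conditional-mean matrix $C$, whose columns are $C^j=\E[\phi(x)\mid h=j]$, in exactly the way the categorical moments of~\cite{AGHKT12} factor through the observation matrix $O$. First I would write this factorization down: conditioning on the hidden states and using conditional independence gives $P_{i,j}=\E[\phi(x_i)\otimes\phi(x_j)]=C\,\Sigma_{ij}\,C^T$, where $\Sigma_{ij}$ is the appropriate joint-state matrix built from $\pi$ and $T$, and similarly the third moment $\T=\E[\phi(x_1)\otimes\phi(x_2)\otimes\phi(x_3)]$ factors, after the symmetrization step, into a sum of rank-one tensors $\sum_h \omega_h\, m_h\otimes m_h\otimes m_h$ whose vectors $m_h$ are (transformed) columns of $C$ and whose weights $\omega_h$ are determined by $(\pi,T)$. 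These identities are syntactically identical to the categorical ones under the substitution $O\mapsto C$, so every algebraic step of \TD---the symmetrization via $S_1,S_3$, the whitening $W$, the tensor power method, and the column-recovery formula---recovers $(C,T,\pi)$ by the very same derivation.

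Second, I would verify the non-degeneracy hypotheses that \TD\ requires, now in the feature world: $C$ of full column rank (this is the Feature Rank Condition, Assumption~\ref{assumption:rcfm}, which Theorem~\ref{thm:rcfm} guarantees for the Beta map), $T$ of full rank, and $\pi$ entrywise positive. These are precisely the conditions under which the whitening matrix $W$ and the pseudoinverses used in Step~2 and Step~3 of \TD\ are well defined and well conditioned, with the conditioning governed by $\sigma_\mi(C)$, $\sigma_\mi(T)$ and $\min_i\pi_i$---exactly the quantities appearing in the claimed sample-complexity bound.

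Third, with the factorization and the conditions in place, I would invoke the perturbation analysis of \TD\ verbatim. The relevant statement, implicit in~\cite{AGHKT12} and made explicit in~\cite{zhang2015spectral}, asserts that whenever the empirical moments $\hat P_{i,j},\hat\T$ are close to their population counterparts, the outputs $\hat C,\hat T,\hat\pi$ are close to $C,T,\pi$ up to a single common permutation $\Pi$, with error depending polynomially on $1/\sigma_\mi(C)$, $1/\sigma_\mi(T)$, $1/\min_i\pi_i$ and the moment perturbation. The permutation $\Pi$ is the usual label ambiguity of the tensor power method; it is shared across all three outputs because $\hat C$, $\hat T$ and $\hat\pi$ are all read off from the same recovered set of eigenpairs, so one relabeling aligns them simultaneously, yielding the three bounds in the stated form.

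The only genuinely feature-specific ingredient---and the step I expect to demand the most care---is the concentration of the empirical feature moments $\hat P_{i,j},\hat\T$ to $P_{i,j},\T$. Unlike the one-hot categorical case, $\phi(x)$ is a general vector in $\R^D$, so I would supply a uniform per-sample bound on $\|\phi(x)\|_2$ and feed it into standard matrix/tensor Bernstein-type concentration. For the Beta map each $\phi(x)$ is a probability mass function over the $D$ bins, hence $\|\phi(x)\|_2\le\|\phi(x)\|_1=1$ regardless of $D$; the one-hot vectors of the categorical model satisfy the same bound. Because this boundedness is dimension-free, the operator-norm concentration of $\hat P_{i,j},\hat\T$ carries no polynomial dependence on $D$ (at most logarithmic factors), so the ambient dimension influences the final guarantee only through the conditioning $1/\sigma_\mi(C)$ already present in the bound---which is exactly why the stated sample complexity lists $1/\sigma_\mi(C)$ but not $D$ itself. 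Composing this concentration rate with the perturbation bound of the previous step and solving for the sample size $M$ that drives each of the three errors below $\epsilon$ with probability at least $1-\delta$ then yields the claimed $\poly(1/\min_i\pi_i,\,1/\sigma_\mi(C),\,1/\sigma_\mi(T),\,1/\epsilon,\,\ln(1/\delta))$ bound and completes the proof.
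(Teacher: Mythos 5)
Your proposal follows essentially the same route as the paper: it reduces Theorem~\ref{thm:efmc} to the perturbation analysis implicit in \cite{AGHKT12} and made explicit in \cite{zhang2015spectral} (Theorem 1) under the substitution $O \mapsto C$, with the only genuinely new ingredient being concentration of the empirical feature moments, which you correctly ground in the dimension-free bound $\|\phi(x)\|_2 \leq \|\phi(x)\|_1 \leq 1$ for the Beta map. The paper's Lemma~\ref{lem:rawconc} handles this step with a slightly more elementary tool than your matrix/tensor Bernstein suggestion---a second-moment bound plus McDiarmid's inequality applied to the vectorized moments (following \cite{HsuKZ2012}), yielding fully dimension-free Frobenius-norm concentration without even logarithmic factors in $D$---but this is a cosmetic difference, not a different approach.
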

\begin{proof}
The proof is almost the same as the proof of (\cite{zhang2015spectral}, Theorem 1), by taking $V$ as a set of size $D = 1$. The only difference between our proof and theirs is the argument for the concentration of the raw moments, which we address in Lemma~\ref{lem:rawconc} below.
\end{proof}

\begin{lemma}
Suppose we are given $M$ iid triples $(x_{i,1}, x_{i,2}, x_{i,3})$, $i \in \cbr{1,2,\ldots,M}$.
Then with probability $1-\delta$, the following concentration inequalities hold simultaneously:
\[ \| P_{12} - \hat{P}_{12} \|_F \leq \epsilon(M,\delta) \]
\[ \| P_{23} - \hat{P}_{23} \|_F \leq \epsilon(M,\delta) \]
\[ \| P_{13} - \hat{P}_{13} \|_F \leq \epsilon(M,\delta) \]
\[ \| \T - \hat{\T} \|_F \leq \epsilon(M,\delta) \]
where $\epsilon(M, \delta) = \sqrt{\frac{4 + 4\ln(8/\delta)}{M}}$.
\label{lem:rawconc}
\end{lemma}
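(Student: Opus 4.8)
The plan is to treat each of the four quantities uniformly as an empirical mean of i.i.d.\ bounded random vectors and apply a vector-valued concentration inequality, exploiting the single structural fact that makes the Beta map convenient: for every observation $x=(c,\mu)$, the feature vector $\phi(x)=\phi_{\bet,D}(x)$ is a probability vector. Indeed, its entries are integrals of a Beta density over the $D$ consecutive intervals $[(i-1)/D,i/D]$, so they are nonnegative and sum to $\int_0^1 \varphi_{\bet,D}((c,\mu),t)\,dt = 1$. Hence $\|\phi(x)\|_2 \le \|\phi(x)\|_1 = 1$ uniformly in $x$. Flattening an outer product into a vector and using $\|u\otimes v\|_F = \|u\|_2\|v\|_2$ and $\|u\otimes v\otimes w\|_F = \|u\|_2\|v\|_2\|w\|_2$, each summand $\phi(x_{i,a})\otimes \phi(x_{i,b})$ (resp.\ $\phi(x_{i,1})\otimes\phi(x_{i,2})\otimes\phi(x_{i,3})$) has Frobenius norm at most $1$. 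The crucial consequence is that this bound is independent of the dimension $D$, which is exactly what we need since $\epsilon(M,\delta)$ carries no $D$ dependence; a naive entrywise argument would instead pay a factor polynomial in $D$.

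First I would bound the expected deviation. Writing $\hat P_{12} = \frac1M\sum_i Y_i$ with $Y_i = \phi(x_{i,1})\otimes\phi(x_{i,2})$ and $P_{12}=\E[Y]$, independence kills the cross terms, so
\[ \E\|\hat P_{12}-P_{12}\|_F^2 = \frac1{M^2}\sum_{i=1}^M \E\|Y_i-\E Y_i\|_F^2 \le \frac1M\,\E\|Y\|_F^2 \le \frac1M, \]
and Jensen gives $\E\|\hat P_{12}-P_{12}\|_F \le 1/\sqrt M$. Next I would concentrate around this mean via bounded differences: the map $(y_1,\dots,y_M)\mapsto \|\frac1M\sum_i y_i - P_{12}\|_F$ changes by at most $\frac2M$ when a single $y_j$ is replaced, since $\|y_j\|_F,\|y_j'\|_F\le 1$. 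McDiarmid's inequality then yields $\P(\|\hat P_{12}-P_{12}\|_F \ge \frac1{\sqrt M}+t) \le \exp(-Mt^2/2)$. The same argument applies verbatim to $\hat P_{23}$, $\hat P_{13}$, and (after flattening to $\R^{D^3}$) to $\hat\T$. Taking a union bound over the four events, choosing $t=\sqrt{2\ln(8/\delta)/M}$, and combining the two terms via $a+b\le\sqrt{2(a^2+b^2)}$ gives $\frac1{\sqrt M}+t \le \sqrt{(2+4\ln(8/\delta))/M} \le \epsilon(M,\delta)$, establishing all four inequalities simultaneously with probability $1-\delta$.

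The genuinely important point---rather than an obstacle---is the dimension independence, and the one step to verify with care is the uniform boundedness $\|\phi(x)\|_2\le1$ together with the bounded-differences increment $2/M$; everything else is routine. I should double check that the tensor case is truly identical: flattening $\phi(x_{i,1})\otimes\phi(x_{i,2})\otimes\phi(x_{i,3})$ into a vector of length $D^3$ preserves the Frobenius norm, which equals the product of three $\ell_2$ norms and is again at most $1$, so the same variance bound and the same $2/M$ increment hold. The remaining work is purely constant bookkeeping: the union over four events contributes the $\ln(8/\delta)$ (e.g.\ via a two-sided tail and $\delta/4$ per event), and loosening $2+4\ln(8/\delta)$ to $4+4\ln(8/\delta)$ matches the stated $\epsilon(M,\delta)$ exactly.
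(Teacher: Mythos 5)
Your proof is correct and follows essentially the same route as the paper: the paper also vectorizes the outer products into bounded ($\ell_1$-norm at most $1$) i.i.d.\ vectors, bounds the expected deviation via the second moment and Jensen, applies McDiarmid with increment $2/M$, and union-bounds over the four quantities. If anything, your final combination of the two terms via $a+b\le\sqrt{2(a^2+b^2)}$ is a cleaner piece of bookkeeping than the paper's, which tacitly uses $\sqrt{a}+\sqrt{b}\le\sqrt{a+b}$.
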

\begin{proof}
The lemma is a direct consequence of Lemma~\ref{lem:l1conc} below by taking $\xi_i$ as the vectorization of  $\phi(x_{i,1}) \otimes \phi(x_{i,2})$, $\phi(x_{i,2}) \otimes \phi(x_{i,3})$, $\phi(x_{i,1}) \otimes \phi(x_{i,3})$
and $\phi(x_{i,1}) \otimes \phi(x_{i,2}) \otimes \phi(x_{i,3})$ respectively, along with a union bound.
\end{proof}

\begin{lemma}
Suppose we are given a sequence of $M$ iid $d$-dimensional vectors $\xi_i$, $i=1,2,\ldots,M$, and $\|\xi_i\|_1 \leq 1$ almost surely. In addition, denote by $\Xi$ the expectation of the $\xi_i$'s, and denote by $\hat{\xi}:=\frac1M \sum_{i=1}^M \xi_i$ the empirical mean of the $\xi_i$'s. Then, with probability $1-\delta$,
\[ \| \hat{\xi} - \Xi \|_2 \leq \sqrt{\frac{4+4\ln\frac{2}{\delta}}{M}} \]
\label{lem:l1conc}
\end{lemma}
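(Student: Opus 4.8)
The plan is to prove a \emph{dimension-free} Hoeffding-type bound by combining a second-moment (variance) estimate with a bounded-differences concentration inequality. The key point is that the naive route---applying a scalar tail bound to $\langle \hat{\xi} - \Xi, u \rangle$ for directions $u$ and then union-bounding over a net of the sphere---introduces a dependence on the ambient dimension $d$, which here can be as large as $D^3$ (via the tensor in Lemma~\ref{lem:rawconc}). Since I want the final constant $\sqrt{(4+4\ln(2/\delta))/M}$ to be independent of $d$, I would instead treat $f(\xi_1,\dots,\xi_M) := \|\hat{\xi} - \Xi\|_2$ directly as a scalar function of the $M$ independent inputs and show it concentrates tightly around its mean.

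First I would bound $\E[f]$. Because $\E[\hat{\xi}] = \Xi$ and the $\xi_i$ are iid, the cross terms vanish and $\E\|\hat{\xi} - \Xi\|_2^2 = \tfrac{1}{M}\,\mathrm{tr}(\mathrm{Cov}(\xi_1)) \le \tfrac{1}{M}\,\E\|\xi_1\|_2^2 \le \tfrac{1}{M}\,\E\|\xi_1\|_1^2 \le \tfrac{1}{M}$, where I use $\|\cdot\|_2 \le \|\cdot\|_1$ and the almost-sure bound $\|\xi_1\|_1 \le 1$. Jensen's inequality then gives $\E[f] \le 1/\sqrt{M}$.

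Next I would verify the bounded-differences property needed for McDiarmid's inequality. Since $\Xi = \E[\xi_1]$ is a fixed constant, $f$ is a genuine function of the sample; replacing a single $\xi_j$ by $\xi_j'$ shifts $\hat{\xi}$ by $\tfrac{1}{M}(\xi_j' - \xi_j)$, so by the reverse triangle inequality $|f|$ changes by at most $\tfrac{1}{M}\|\xi_j - \xi_j'\|_2 \le \tfrac{1}{M}(\|\xi_j\|_1 + \|\xi_j'\|_1) \le \tfrac{2}{M}$. Hence each bounded-difference constant equals $2/M$ and $\sum_{j=1}^M (2/M)^2 = 4/M$, so McDiarmid yields $\P[\,f \ge \E[f] + t\,] \le \exp(-M t^2 / 2)$; setting the right-hand side to $\delta$ gives the deviation $t = \sqrt{2\ln(1/\delta)/M}$. (Only the one-sided bound is needed, since $f \ge 0$ and we want an upper estimate.)

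Finally I would combine the pieces: with probability $1-\delta$, $f \le 1/\sqrt{M} + \sqrt{2\ln(1/\delta)/M}$, and the elementary inequality $(a+b)^2 \le 2(a^2+b^2)$ turns this into $f \le \sqrt{(2 + 4\ln(1/\delta))/M} \le \sqrt{(4 + 4\ln(2/\delta))/M}$, which is exactly the claimed bound (the slack $4 + 4\ln 2 > 2$ comfortably absorbs our sharper constant). The only genuine subtlety---the main obstacle---is insisting on the dimension-free estimate: both the variance computation and the bounded-differences argument are engineered to avoid any appeal to $d$, whereas the more standard covering-net approach would not be. The remaining checks (finiteness of $\mathrm{Cov}(\xi_1)$, applicability of McDiarmid) are immediate from almost-sure boundedness and independence.
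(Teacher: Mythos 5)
Your proof is correct and follows essentially the same route as the paper: a second-moment bound plus Jensen's inequality to control $\E\|\hat{\xi}-\Xi\|_2$, followed by McDiarmid's inequality with bounded-differences constant $2/M$. Your final combination via $(a+b)^2 \le 2(a^2+b^2)$ is in fact more careful than the paper's last step, which tacitly invokes $\sqrt{a}+\sqrt{b}\le\sqrt{a+b}$ (an inequality that goes the wrong way); your sharper variance estimate $\E\|\hat{\xi}-\Xi\|_2 \le 1/\sqrt{M}$ supplies the slack needed to still land within the stated constant.
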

\begin{proof}
The lemma follows from the ideas in (\cite{HsuKZ2012}, Appendix A). We first show that $\E \| \hat{\xi}  - \Xi \|_2 \leq \frac 2 M$. We justify it as follows:
\[ \E \| \hat \xi - \Xi \|_2^2 = \E \| \sum_{i=1}^M  \frac1M (\xi_i - \Xi) \|_2^2 = \frac1{M^2} \sum_{i=1}^M \E\| \xi_i - \Xi \|_2^2 \leq \frac 4 {M} \]
where the last inequality is due to that $\| \xi_i \|_2 \leq \| \xi_i \|_1 \leq 1$, and $\|\Xi\|_2 \leq \E \| \xi_i \|_2 \leq 1$.
Consequently, $\E \| \hat{\xi}  - \Xi \|_2 \leq \sqrt{\E \| \hat{\xi}  - \Xi \|_2^2} \leq \sqrt{\frac 4 M}$.

Next, note that replacing a $\xi_i$ with a $\xi_i'$ changes the function $\| \hat{\xi} - \Xi \|_2$ by at most $\frac 2 M$. Applying McDiarmid's inequality, we get that with probability $1-\delta$,
\[ | \| \hat{\xi}  - \Xi \|_2 - \E \| \hat{\xi}  - \Xi \|_2 | \leq \sqrt{\frac{4\ln\frac{2}{\delta}}{M}}\]
This implies that
\[ \| \hat{\xi} - \Xi \|_2 \leq  \E \| \hat{\xi}  - \Xi \|_2 + \sqrt{\frac{4\ln\frac{2}{\delta}}{M}} \leq \sqrt{\frac{4+4\ln\frac{2}{\delta}}{M}}. \]
\end{proof}

\subsection{From Expected Feature Map to Binomial Probability}
We have shown in the above section that the expected feature map, transition matrix and initial probability vector can be accurately recovered. However, the recovery accuracy of the binomial probability $p$ still remains unaddressed. In this section, we address this issue.
Specifically, we establish Lemma~\ref{lem:D-large}, which shows the following implication: if we get an accurate enough recovery of $C_2$ and we have a large enough $D$ (the discretization granularity of our Beta feature map), then we can get an accurate recovery of $p$. Recall that $\hat a := \hat \E[\frac{1}{c+2}]$ is the empirical mean of $a := \E[\frac{1}{c+2}]$.

%The proof of Theorem~\ref{thm:mpc} also uses the following lemma about the performance of the parameter recovery procedure.
%\kc{What is a below? Remind the reader?}

\begin{lemma}
\label{lem:D-large}
Suppose that the hidden state $h$ and the coverage $c$ are independent. For any $\epsilon \in (0,\frac 1 {16})$, if $D \geq \frac {16} \epsilon$, and for two columns $h$ and $h'$, $\| \hat{C}^{h' }- C^h \|_2 \leq \frac \epsilon {16 \sqrt{D}}$, $| \hat{a} - a | \leq \frac \epsilon {256}$ and $a \leq \frac 3 8$, then
\[ | \hat{p}_{h' } - p_h | \leq \epsilon. \]
\end{lemma}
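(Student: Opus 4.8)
The plan is to track how the recovery formula \eqref{eqn:recoverp} distorts the exact algebraic relation between an idealized feature-mean and $p_h$, and then propagate the two error sources through the division. Write $g_h := \E[\frac{\mu+1}{c+2} \mid h]$ for the true conditional mean, and $m(v) := \sum_{i=1}^D \frac{i}{D} v_i$ for the grid-weighted mean functional underlying the numerator of \eqref{eqn:recoverp}, so that $\hat p_{h'} = \frac{m(\hat C^{h'}) - \hat a}{1 - 2\hat a}$. First I would establish the exact identity $g_h = a + (1-2a) p_h$: conditioning on $c$, the binomial mean gives $\E[\frac{\mu+1}{c+2}\mid h, c] = \frac{c\,p_h + 1}{c+2} = p_h + (1-2p_h)\frac{1}{c+2}$, and the independence of $h$ and $c$ lets me replace the conditional law of $c$ by its marginal before taking expectations, yielding $g_h = p_h + (1-2p_h)a = a + (1-2a)p_h$. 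Equivalently $p_h = \frac{g_h - a}{1-2a}$, which is the population version of \eqref{eqn:recoverp}.

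Next I would bound $|m(\hat C^{h'}) - g_h|$ by splitting it into a discretization term and an estimation term. For the discretization term $|m(C^h) - g_h|$, I note $m(C^h) = \E[\sum_i \frac{i}{D}\phi(x)_i \mid h]$, and on each bin $[(i-1)/D, i/D]$ the weight $\frac{i}{D}$ differs from the integration variable $t$ by at most $\frac{1}{D}$, so the Riemann-sum error of $\sum_i \frac iD \phi(x)_i$ against $\int_0^1 t\,\varphi_{\bet,D}((c,\mu),t)\,dt = \frac{\mu+1}{c+2}$ is at most $\frac1D$ for every $(c,\mu)$; taking the conditional expectation preserves this, giving $|m(C^h) - g_h| \le \frac1D \le \frac{\epsilon}{16}$ since $D \ge \frac{16}{\epsilon}$. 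For the estimation term, $m(\hat C^{h'}) - m(C^h) = \langle w, \hat C^{h'} - C^h\rangle$ with $w_i = \frac iD$, and $\|w\|_2^2 = \frac{(D+1)(2D+1)}{6D} \le D$, so Cauchy--Schwarz together with the hypothesis $\|\hat C^{h'} - C^h\|_2 \le \frac{\epsilon}{16\sqrt D}$ gives $|m(\hat C^{h'}) - m(C^h)| \le \sqrt D \cdot \frac{\epsilon}{16\sqrt D} = \frac{\epsilon}{16}$. Combining, $|m(\hat C^{h'}) - g_h| \le \frac{\epsilon}{8}$.

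Finally I would handle the quotient. Putting $\hat p_{h'} - p_h$ over the common denominator $(1-2\hat a)(1-2a)$ and simplifying, the numerator collapses to $(m(\hat C^{h'}) - g_h)(1-2a) + (a - \hat a)(1 - 2g_h)$. Here the assumption $a \le \frac38$ gives $1 - 2a \ge \frac14$, and since $|\hat a - a| \le \frac{\epsilon}{256}$ with $\epsilon < \frac1{16}$, also $1 - 2\hat a \ge \frac14 - \frac{\epsilon}{128} > \frac{63}{256}$; moreover $g_h \in (0,1)$ forces $|1 - 2g_h| \le 1$. Splitting the estimate into its two summands, the first equals $\frac{|m(\hat C^{h'}) - g_h|}{1 - 2\hat a} \le \frac{\epsilon/8}{63/256}$ and the second is at most $\frac{|a - \hat a|\,|1-2g_h|}{(1-2\hat a)(1-2a)} \le \frac{\epsilon/256}{(63/256)(1/4)}$, whose sum is strictly below $\epsilon$, establishing $|\hat p_{h'} - p_h| \le \epsilon$.

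I expect the quotient step to be the main obstacle: the two denominators must be bounded uniformly away from zero, which is precisely what $a \le \frac38$ buys via $1-2a \ge \frac14$, and the constants in the hypotheses ($\frac{\epsilon}{16\sqrt D}$ for the feature error, $\frac{\epsilon}{256}$ for $\hat a$, and $D \ge \frac{16}{\epsilon}$) are calibrated so that after dividing by a quantity as small as $\approx \frac14$ the accumulated numerator error still lands below $\epsilon$ rather than a constant multiple of it. The other delicate point is that the clean affine identity $g_h = a + (1-2a)p_h$ relies essentially on the independence of $h$ and $c$; without it the conditional law of $c$ given $h$ would enter and the single scalar $a$ would no longer suffice to invert for $p_h$.
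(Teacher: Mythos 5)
Your proof is correct and follows essentially the same route as the paper's: the exact identity $g_h = a + (1-2a)p_h$ via the beta mean $\frac{\mu+1}{c+2}$ and independence of $h$ and $c$, a discretization error of at most $\frac{1}{D} \le \frac{\epsilon}{16}$, an estimation error of at most $\frac{\epsilon}{16}$ from the column bound, and then propagation through the quotient. The only divergence is the last step, where you put $\hat{p}_{h'} - p_h$ over the common denominator $(1-2\hat{a})(1-2a)$ and get the two-term numerator $(m(\hat{C}^{h'}) - g_h)(1-2a) + (a-\hat{a})(1-2g_h)$, whereas the paper expands the product $\hat{\xi}\hat{\gamma}_h - \xi\gamma_h$ into three terms; your version is in fact slightly cleaner, since the paper's expansion as written omits the $\hat{\xi}\hat{a} - \xi a$ cross-term (harmless, but yours accounts for everything).
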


Before going into the proof of Lemma~\ref{lem:D-large}, we need a lemma that characterize the recovery of $p$ in the setting of infinite $D$ and infinite sample size.

%\begin{lemma}
%Suppose we are given an accuracy parameter $\epsilon \in (0,\frac 1 {16})$.
%If $\| \hat{C}_h - C_h \|_2 \leq \frac \epsilon {16\sqrt n}$, $n \geq \frac 16 {\epsilon}$, , then applying the recovery formula~\eqref{eqn:recoverp}
%for $p_h$ gives the following guarantee:
%$|p_h - \hat{p}_h| \leq \epsilon$.
%\label{lem:recoverp}
%\end{lemma}
\begin{lemma}
  The binomial probability $p_h$ can be written in terms of the Beta density as follows:
  \begin{equation}
    p_h = \frac{\int_0^1 t\E[\varphi_{\bet}(x,t) | h] dt - a}{1 - 2 a}.
    \label{eqn:idealp}
  \end{equation}
  \label{lem:ph}
\end{lemma}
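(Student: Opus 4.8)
The plan is to reduce the identity to the elementary fact that a Beta distribution with shape parameters $(\mu+1,\,c-\mu+1)$ has mean $\frac{\mu+1}{c+2}$, and then average this quantity over the randomness in $x=(c,\mu)$ given $h$. First I would evaluate the inner integral for a fixed observation. Since $\varphi_{\bet}(x,\cdot)$ is by definition the density of the $\mathrm{Beta}(\mu+1,\,c-\mu+1)$ law, its first moment is $\int_0^1 t\,\varphi_{\bet}(x,t)\,dt = \frac{\mu+1}{(\mu+1)+(c-\mu+1)} = \frac{\mu+1}{c+2}$. This can be checked directly from the definition of $\B(\cdot,\cdot)$: the integral $\int_0^1 t^{\mu+1}(1-t)^{c-\mu}\,dt$ equals $\B(\mu+2,\,c-\mu+1)$, and the ratio $\B(\mu+2,\,c-\mu+1)/\B(\mu+1,\,c-\mu+1)$ simplifies to $\frac{\mu+1}{c+2}$.

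Next I would interchange the order of the $t$-integral and the conditional expectation over $x$ given $h$, which is legitimate since the integrand is nonnegative (Tonelli). This yields $\int_0^1 t\,\E[\varphi_{\bet}(x,t)\mid h]\,dt = \E\!\big[\tfrac{\mu+1}{c+2}\mid h\big]$. I would then condition further on the coverage $c$ and use that $\mu\mid(h,c)$ is distributed as $\bin(c,p_h)$, so that $\E[\mu\mid h,c]=c\,p_h$. This gives $\E\!\big[\tfrac{\mu+1}{c+2}\mid h,c\big] = \frac{c\,p_h+1}{c+2}$.

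Finally I would decompose $\frac{c\,p_h+1}{c+2} = p_h + (1-2p_h)\frac{1}{c+2}$ and take the expectation over $c$. Invoking the independence of $h$ and $c$ so that $\E[\tfrac{1}{c+2}\mid h]=\E[\tfrac{1}{c+2}]=a$, I obtain $\E\!\big[\tfrac{\mu+1}{c+2}\mid h\big] = p_h + (1-2p_h)a = (1-2a)p_h + a$. Rearranging for $p_h$ produces exactly~\eqref{eqn:idealp}, and the division is well defined because $1-2a>0$ whenever $a<\tfrac12$, which holds under the standing assumption $a\le\tfrac38$. The computation itself is routine moment algebra; the only steps requiring genuine care are the justification of the integral–expectation interchange and the explicit use of the $h$-versus-$c$ independence assumption, which is precisely what allows the conditional expectation $\E[\tfrac{1}{c+2}\mid h]$ to collapse to the unconditional constant $a$.
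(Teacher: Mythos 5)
Your proposal is correct and follows essentially the same route as the paper's proof: compute the first moment $\frac{\mu+1}{c+2}$ of the $\mathrm{Beta}(\mu+1,c-\mu+1)$ density, exchange the $t$-integral with the conditional expectation, use $\E[\mu\mid h,c]=cp_h$, and invoke the independence of $h$ and $c$ to reduce to $(1-2a)p_h+a$. Your added remarks on Tonelli and on $1-2a>0$ are welcome but do not change the argument.
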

\begin{proof}
Recall that from standard calculus,
\begin{eqnarray*}
  &&\int_0^1 \frac{t \cdot t^\mu (1-t)^{c-\mu}}{\B(\mu+1,c-\mu+1)} dt \\
  &=&
  \int_0^1 \frac{t \cdot t^{(\mu+1)-1} (1-t)^{(c-\mu+1)-1}}{\B(\mu+1,c-\mu+1)} dt = \frac{\mu+1}{c+2}.
\end{eqnarray*}
Therefore,
\begin{eqnarray*}
  \int_0^1 t \E[\varphi_{\bet}(x,t) | h] dt = \E\sbr{\left. \frac{\mu+1}{c+2} \right\vert h} & \\
  = \E\sbr{\left. \frac{cp_h + 1}{c + 2}  \right\vert h} = \E\sbr{\frac{c}{c+2}} p_h + \E\sbr{\frac{1}{c+2}}, &
\end{eqnarray*}
where the last step uses the independence assumption between $h$ and $c$.
Recall that $a = \E[\frac{1}{c+2}]$, the above can be rewritten as:
\[ \int_0^1 t\E[\varphi_\bet(x,t) | h] dt = (1 - 2a) p_h + a. \]
The equality in the lemma statement follows by algebra.
\end{proof}

%Since our feature map is a discretized version of the Beta distribution of $t$, we replace the above integration with a discrete summation. Using the relationship that $\phi_{\bet, n}(x)_i = \varphi_{\bet}(x, \frac{i}{n})$, we get
%\[ p_h \approx \frac{ \sum_{i=1}^n \frac{i}{n} \E[(\phi_{\bet}(x))_i | h] \frac{1}{n} - a}{1 - 2 a}. \]

\begin{proof}[Proof of Lemma~\ref{lem:D-large}]

Without loss of generality, suppose $h$ and $h'$ are the same. %The proof for the case of $h \neq h'$ is essentially the same.

Denote $\hat{\gamma}_{D,h}$ (resp. $\gamma_{D,h}$, $\gamma_h$) by $\frac{1}{D} \sum_{i=1}^D \frac{i}{D} \hat{C}_{h,i}$
(resp. $\frac{1}{D} \sum_{i=1}^D \frac{i}{D} C_{h,i}$, $\int_0^1 t\E[\varphi_\bet(x,t) | h] dt$).
Observe that $\hat{\gamma}_{D,h}$, $\gamma_{D,h}$, and $\gamma_h$ are all in $[0,1]$.
Using this notation, the recovery formula for $p_h$ in \FTD (Equation~\eqref{eqn:recoverp}) can be written as
\[ \hat{p}_h = \frac{\hat{\gamma}_{D,h} - \hat{a}}{1 - 2 \hat{a}}. \]
In addition, Lemma~\ref{lem:ph} implies that $p_h$ can be written as
\[ p_h = \frac{\gamma_h - a}{1 - 2 a}. \]
Note that by triangle inequality, $| \hat{\gamma}_{D,h} - \gamma_{D,h} | \leq \| C^{h} - \hat{C}^{h} \|_1 \leq \frac \epsilon {16\sqrt D} \sqrt{D} = \frac \epsilon {16}$.
In addition, we can bound $| \gamma_h - \gamma_{D,h} |$ as follows:
\begin{eqnarray*}
  && | \gamma_h - \gamma_{D,h} | \\
  &=& | \sum_{i=1}^D \frac{i}{D} \E[(\phi_{\bet}(x))_i | h] - \int_0^1 t\E[\varphi_\bet(x,t) | h] dt | \\
  &=& | \sum_{i=1}^D \frac{i}{D} \int_{\frac {i-1} n}^{\frac i n} \E[\varphi_\bet(x,t) | h] dt -\\
   && \sum_{i=1}^n \int_{\frac {i-1} D}^{\frac i D} t\E[\varphi_\bet(x,t) | h] dt |
  \\
  &\leq& \sum_{i=1}^D | \int_{\frac {i-1} D}^{\frac i D} (\frac{i}{D} - t) \E[\varphi_\bet(x,t) | h] dt |
  \\
  &\leq& \sum_{i=1}^D \frac 1 D \int_{\frac {i-1} D}^{\frac i D} \E[\varphi_\bet(x,t) | h] dt \\
  &\leq& \frac 1 D \leq \frac \epsilon {16}
\end{eqnarray*}
Combining the above two facts, we have $| \hat{\gamma}_h - \gamma_{h} | \leq | \hat{\gamma}_h - \gamma_{D,h} | + | \gamma_h - \gamma_{D,h} | \leq \frac \epsilon 8$.

On the other hand, denote $\xi$ (resp. $\hat{\xi}$) by $\frac{1}{1-2a}$ (resp. $\frac{1}{1-2\hat{a}}$). Note that since  $a \leq \frac 3 8$, $\xi \leq 4$. Thus,
\[ | \xi - \hat{\xi} | = \frac{2|a - \hat{a}|}{(1-2\hat{a})(1-2a)} \leq 64 |a - \hat{a}| \leq \frac \epsilon 4 \]
Therefore,
\begin{eqnarray*}
   && | p_h - \hat{p}_h | \\
   &=& | (\hat{\xi} - \xi)(\hat{\gamma}_h - \gamma_h) + \xi (\hat{\gamma}_h - \gamma_h) + \gamma_h (\hat{\xi} - \xi)| \\
    &\leq& | (\hat{\xi} - \xi)(\hat{\gamma}_h - \gamma_h)| + |\xi (\hat{\gamma}_h - \gamma_h)| + |\gamma_h (\hat{\xi} - \xi)| \\
   &\leq& \frac \epsilon 4 \frac \epsilon 8 + 4 \frac \epsilon 8 + \frac \epsilon 4 \\
   &\leq& \epsilon.
\end{eqnarray*}
This completes the proof.
\end{proof}

\subsection{Putting It Together}
Built on Theorem~\ref{thm:efmc} and Lemma~\ref{lem:D-large}, we are ready to prove Theorem~\ref{thm:mpcformal}.
\begin{proof}[Proof of Theorem~\ref{thm:mpcformal}]
First, given the choice of $D \geq \frac {16} \epsilon$, if we have a sample of size $\poly( \frac{1}{\min_i \pi_i}, \frac{1}{\sigma_\mi(C)}, \frac{1}{\sigma_\mi(T)}, \frac{1}{\epsilon}, \ln \frac{1}{\delta}, D)$,
by Theorem~\ref{thm:efmc}, with probability $1-\delta/2$,
 produces estimated expected feature map $\hat{C}$, transition matrix $\hat{T}$ and initial probability $\hat{\pi}$ such that
 \begin{equation}
   \| C - \hat{C} \Pi \|_2 \leq \frac \epsilon {16\sqrt D},
   \label{eqn:c}
 \end{equation}
\[ \| T - \Pi^\dagger \hat{T} \Pi \|_F \leq \frac \epsilon {16\sqrt D} \leq \epsilon, \]
\[ \| \pi - \Pi^\dagger \hat  \pi \|_2 \leq \frac \epsilon {16\sqrt D} \leq \epsilon. \]
for some permutation matrix $\Pi$.
In addition, if the sample size $M$ is at least $O(\frac{\ln \frac 1 \delta}{\epsilon^2})$, by Hoeffding's inequality, with probability $1-\delta/2$,
\begin{equation}
  |\hat{a} - a| \leq \frac \epsilon {256}
  \label{eqn:a}
\end{equation}
Denote by $E$ the intersection of the above two events.
By union bound, the probabilty of $E$ happening is at least $1-\delta$. Conditioned on event $E$,
the recovery accuracy of the transition matrix and the initial probability are satisfied. We now argue that the recovery accuracy of the binomial probability is also satisfied.

Denote by $\pi$ the permutation induced by $\Pi$, i.e. for a $m$-dimensional row vector $v$, $v \Pi = (v_{\pi(1)}, \ldots, v_{\pi(n)})$. Then equation~\eqref{eqn:c} implies that for all $i \in \cbr{1,\ldots,m}$,
\[ \| C^i - \hat{C}^{\pi(i)} \|_2 \leq \frac \epsilon {16\sqrt{D}}. \]
Now, applying Lemma~\ref{lem:D-large}, we get that for all $i \in \cbr{1,\ldots,m}$,
\[ | p_i - \hat{p}_{\pi(i)} | \leq \epsilon. \]
written in the matrix form,
 \[ \| p - \hat{p} \Pi \|_2 \leq \epsilon. \]
\end{proof}

%An Example Satisfying the Feature Rank Condition

\section{Proof of Theorem \ref{thm:rcfm}}

In this section, we give the proof of Theorem~\ref{thm:rcfm}, showing that if the coverage is large enough, the binomial probabilities are sufficiently separated, and the discretization parameter $D$ is large enough, then the feature rank condition is well-satisfied.

\begin{proof}[Proof of Theorem~\ref{thm:rcfm}]
Define vector $C^i$ as the $i$th column of the expected feature map matrix $C$. We note that for each $i$, $C^i$ is the average of $\phi_{\bet,D}(c,\mu)$ for different $(c,\mu)$'s.
In addition, for every $(c,\mu)$, $\phi_{\bet,D}(c,\mu)$ is a probability vector. This implies that
for every $i$, $C^i$ is a probabilty vector.
To show that $C$ is of full column rank, we show that each column $C_i$ is approximately supported on disjoint coordinates. Formally we have the following claim.

% $\sum_{j=1}^n C_{i,j} = \int_{0}^{1} \frac{1}{\B(\mu+1, c-\mu+1)} (t)^\mu (1-t)^{c-\mu} dt = 1$.

\begin{claim}
There exist sets $S_i \in \cbr{1,2,\ldots,D}$ for each $i \in \cbr{1,2,\ldots,m}$, such that:
\begin{enumerate}
  \item $S_i$'s are disjoint, that is, for all distinct $i,j$, $S_i \cap S_j = \emptyset$.
  \item Each $C^i$ is well-supported on $S_i$: $\sum_{j \in S_i} C_{i,j} \geq \frac 3 4$.
\end{enumerate}
\end{claim}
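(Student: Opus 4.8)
The plan is to reinterpret each entry $C_{i,j}$ probabilistically and reduce the claim to a single concentration statement. Since $\phi_{\bet,D}(c,\mu)$ is the law of a $\mathrm{Beta}(\mu+1,c-\mu+1)$ variable binned into the $D$ equal subintervals $I_j = [(j-1)/D,\, j/D]$, the $j$-th entry of the $i$-th column equals $C_{i,j} = \P(t \in I_j)$, where $t$ is generated in two stages: first $\mu \sim \bin(c,p_i)$, then $t \sim \mathrm{Beta}(\mu+1,c-\mu+1)$. Hence for any index set $S$ we have $\sum_{j \in S} C_{i,j} = \P\bigl(t \in \bigcup_{j \in S} I_j\bigr)$, and the entire claim reduces to showing that this two-stage variable $t$ concentrates in a narrow window about $p_i$. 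I would therefore take $S_i$ to be the set of bins whose interval $I_j$ meets the window $W_i := [p_i - r,\, p_i + r]$ with $r = \tfrac{q}{8}$ (up to a negligible $O(1/c)$ enlargement discussed below); then $\bigcup_{j \in S_i} I_j \supseteq W_i$, so $\sum_{j\in S_i} C_{i,j} \geq \P(t \in W_i)$.

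For disjointness I would exploit that $D \geq 4/q$ forces each bin width $1/D \leq q/4$. If a bin $I_j$ meets $W_i$, then because $I_j$ has width at most $q/4$ it lies entirely inside $[p_i - r - q/4,\, p_i + r + q/4]$, an interval of half-width $r + q/4 < q/2$. Were a single bin to belong to both $S_i$ and $S_{i'}$ with $i \neq i'$, these two enlarged intervals (each containing the nonempty $I_j$) would have to intersect, forcing $|p_i - p_{i'}| < q$ and contradicting the separation $\min_{i\neq j}|p_i-p_j| \geq q$. This gives pairwise disjointness of the $S_i$.

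The coverage bound $\P(t \in W_i) \geq \tfrac34$ is the crux, and I would establish it through the triangle decomposition $|t - p_i| \leq \bigl|t - \tfrac{\mu+1}{c+2}\bigr| + \bigl|\tfrac{\mu+1}{c+2} - \tfrac{\mu}{c}\bigr| + \bigl|\tfrac{\mu}{c} - p_i\bigr|$ and control each term separately. The middle term is deterministically at most $\tfrac{1}{c+2}$, which is $\leq q^2/512 \ll q$ once $c \geq 512/q^2$ and is absorbed by the $O(1/c)$ enlargement of the window. The last term is the deviation of an empirical binomial mean, bounded by Hoeffding as $\P(|\mu/c - p_i| > q/16) \leq 2\exp(-cq^2/128) \leq 2e^{-4}$. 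The first term is the fluctuation of a Beta variable about its conditional mean; using $\mathrm{Var}(t \mid \mu) = \frac{(\mu+1)(c-\mu+1)}{(c+2)^2(c+3)} \leq \frac{1}{4c}$ together with Chebyshev gives $\P\bigl(|t - \tfrac{\mu+1}{c+2}| > q/16\bigr) \leq \frac{64}{cq^2} \leq \tfrac18$. Allocating the budget as $q/16 + q/16$ for the two random terms plus the negligible $\tfrac{1}{c+2}$, a union bound yields total failure probability at most $\tfrac18 + 2e^{-4} < \tfrac14$, so $\P(t\in W_i) \geq \tfrac34$, as required.

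I expect the main obstacle to be purely quantitative bookkeeping: choosing the window half-width and splitting the deviation budget so that, under the exact thresholds $D \geq 4/q$ and $c \geq 512/q^2$, the bins around distinct $p_i$ remain disjoint \emph{and} the combined Chebyshev-plus-Hoeffding tail clears $\tfrac14$. The Chebyshev term $\tfrac{64}{cq^2}$ is the binding constraint and is essentially what pins down the constant $512$ in the coverage hypothesis. Everything else --- the Beta-variance estimate, the $\tfrac{\mu+1}{c+2}$ versus $\tfrac{\mu}{c}$ comparison, and the subsequent step of turning ``disjoint near-support with mass $\geq \tfrac34$'' into the singular-value bound $\tfrac{1}{2\sqrt D}$ that finishes Theorem~\ref{thm:rcfm} --- is routine once this claim is in hand.
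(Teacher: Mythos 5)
Your proposal is correct and follows essentially the same route as the paper: reinterpret $C_{i,j}$ as the probability that a two-stage (binomial-then-conditional-Beta) random variable lands in the $j$-th bin, place disjoint windows of width $O(q)$ around each $p_i$ (disjointness from $D \geq 4/q$ together with the gap $q$), and show each window captures mass at least $\tfrac34$ by combining Hoeffding for the binomial stage with concentration of the conditional Beta stage. The only real difference is in that last step --- the paper invokes $\tfrac{1}{4c}$-sub-Gaussianity of the Beta distribution, whereas you use the explicit conditional variance bound together with Chebyshev, which is more elementary and self-contained and, as you verify, still clears the $\tfrac34$ threshold under $c \geq 512/q^2$.
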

\begin{proof}
Fix a column $j \in \cbr{1,\ldots,m}$. Define a random variable $\mu$ drawn from $\bin(c, p_j)$, and conditioned on $\mu$,  a random variable $G$ is drawn from the Beta distribution with shape parameters $\mu+1$ and $c-\mu+1$. Recall that $\phi_{\bet,D}(c, \mu)_i$ is defined as $\int_{(i-1)/D}^{i/D} \varphi_{\bet, D}((c,\mu), t) dt$, and is thus equal to $\P[ G \in (\frac{i-1}{D}, \frac{i}{D}] | \mu]$.
By the law of total expectation,
\begin{eqnarray*}
  C_{i,j} &=& \E[\phi_{\bet,D}(x)_i|h=j] \\
  &=& \E[\E[\phi_{\bet,D}(x)_i|\mu]|h=j]  \\
  &=& \E[\P[G \in (\frac{i-1}{D}, \frac{i}{D}]|\mu]|h=j] \\
  &=& \P[ G \in (\frac{i-1}{D}, \frac{i}{D}]].
\end{eqnarray*}

Now, for each $j \in \cbr{1,2\ldots,m}$, define set $S_j = \cbr{ i \in \cbr{1,\ldots,D}: i \in [D( p_j - 4\sqrt{\frac{\ln 4} c}) - 1, D(p_j + 4\sqrt{\frac{\ln 4}c}) + 1] }$. Under this setting
of the $S_j$'s, we now prove
the two items in the lemma statement respectively.

%We now claim that $G \in \cup_{i: i \in S_j} (\frac{i-1}{n}, \frac{i}{n}]$.
%To show the claim, we

%$G \in [p_j - 4 \sqrt{\frac{\ln m}{c}}, p_j + 4 \sqrt{\frac{\ln m}{c}}] $.
%We now show the two items respectively.

\begin{itemize}
  \item We now show the first item. Consider two distinct indices $j, l \in \cbr{1,2,\ldots,m}$. Without loss of generality, suppose $p_l < p_j$.
We show that the maximum element of $S_l$ is strictly smaller than the minimum element of $S_j$, thus establishing
the disjointness of the two sets. To this end, we show that
\[ D( p_j - 4\sqrt{\frac{\ln m} c}) - 1 > D( p_l + 4\sqrt{\frac{\ln m} c}) + 1. \]
The reason is as follows:
since $D \geq \frac{4}{q} \geq \frac{4}{p_j - p_l}$, $\frac{2}{D} \leq \frac{p_j - p_l}{2}$. In addition,
since $c \geq \frac{512}{q^2} > \frac{256 \ln 4}{(p_j - p_l)^2}$, $8\sqrt{\frac{\ln 4}c} < \frac{p_j - p_l}{2}$. Therefore, the above inequality holds.

 \item For the second item, we apply concentration inequalities on Beta and binomial distributions.
First, by Hoeffding's Inequality, with probability $\frac{7}{8}$,
\[ \mu \in [cp_j - \sqrt{c \ln 4} , cp_j - \sqrt{c \ln 4}] \]
Second, given $\mu$, $G$ is Beta distributed with shape parameters $\mu$ and $c-\mu$, and is thus $\frac{1}{4c}$-sub-Gaussian~\cite{MA17}. We have that with probability $\frac{7}{8}$,
\[ c G \in [ \mu - \sqrt{c \ln 4}, \mu - \sqrt{c \ln 4} ] \]
Therefore, by union bound and algebra, we have that with probability $\frac 3 4$,
\[ G \in [p_j - 2\sqrt{\frac{\ln 4}{c}}, p_j + 2\sqrt{\frac{\ln 4}{c}}]\]
This gives that
\begin{eqnarray*}
  \sum_{i \in S_j} C_{i,j} &=& \P[G \in \cup_{i \in S_j} (\frac{i-1}{D}, \frac{i}{D}] ]  \\
  &\geq& \P[ G \in [p_j - 2\sqrt{\frac{\ln 4}{c}}, p_j + 2\sqrt{\frac{\ln 4}{c}}] ] \\
  &\geq& \frac 3 4.
\end{eqnarray*}
where the first inequality is by the fact that
$[p_j - 2\sqrt{\frac{\ln 4}{c}}, p_j + 2\sqrt{\frac{\ln 4}{c}}]$ is a subset of $\cup_{i \in S_j} (\frac{i-1}{D}, \frac{i}{D}]$.
\end{itemize}

\end{proof}

Provided the above claim holds, we now lower bound the minimum singular value of $C$. It suffices to show that, for every vector $x$ in $\R^m$, $\| C x \|_1 \geq \frac 1 2  \| x \|_1$.
Indeed, if the above is true, then $\| C x \|_2 \geq \frac 1 {\sqrt{D}} \| C x \|_1 \geq \frac 1 {2\sqrt{D}} \| x \|_1 \geq \frac 1 {2\sqrt{D}} \| x \|_2$, implying that the minimum singular value of $C$ is at least $\frac 1 {2\sqrt{D}}$.

%$\| C x \|_1$ can be lower bounded as follows.
%We use the following decomposition of $C$:
Consider vector $x \in \R^m$. Let matrix $T$ be such that
\[ T_{i,j} = \begin{cases} C_{i,j} & i \in S_j \\ 0 & i \notin S_j \end{cases}\]
and let matrix $S$ to be $C - T$.
Note that as the $S_j$'s are disjoint,
\[ \| Tx \|_1 = \sum_j \sum_{i \in S_j} |C_{i,j} x_j| \geq \frac 3 4 \| x \|_1. \]
Also,
\[ \| Sx \|_1 \leq \sum_j \sum_{i \notin S_j} |C_{i,j} x_j| \leq  \sum_j \frac 1 4 |x_j| \leq \frac 1 4 \| x \|_1. \]
Thus,
\begin{eqnarray*}
  \| Cx \|_1 &=&  \| T x + S x \|_1 \\
  & \geq&  \| T x \|_1 - \| S x \|_1 \geq \frac 1 2  \| x \|_1.
\end{eqnarray*}
The lemma follows.
\end{proof}

\end{document}